\def\eqref#1{equation~\ref{#1}}
\def\1{\bm{1}}
\DeclareMathAlphabet{\mathsfit}{\encodingdefault}{\sfdefault}{m}{sl}
\SetMathAlphabet{\mathsfit}{bold}{\encodingdefault}{\sfdefault}{bx}{n}
\newcommand{\E}{\mathbb{E}}
\theoremstyle{plain}
\newtheorem{theorem}{Theorem}[section]
\newtheorem{conjecture}[theorem]{Conjecture}
\theoremstyle{definition}
\newtheorem{definition}[theorem]{Definition}
\theoremstyle{remark}
\newcommand{\red}[1]{\color{black} {#1} \color{black}}
\newcommand{\gaunorm}[1]{\|{#1}\|_{\psi_2}}
\newcommand{\expnorm}[1]{\|{#1}\|_{\psi_1}}
\newcommand{\parama}{\bm{\theta}_a}
\newcommand{\paramb}{\bm{\theta}_b}
\newcommand{\paramc}{\bm{\theta}_c}
\newcommand{\pa}{{(a)}}
\newcommand{\pb}{{(b)}}
\newcommand{\pc}{{(c)}}
\newcommand{\bmb}{\bm{b}}
\newcommand{\bmu}{\bm{u}}
\newcommand{\bmv}{\bm{v}}
\newcommand{\bmx}{\bm{x}}
\newcommand{\bmz}{\bm{z}}
\newcommand{\bmI}{\bm{I}}
\newcommand{\bmW}{\bm{W}}
\newcommand{\bmX}{\bm{X}}
\title{Do We Really Need Permutations? Impact of Model Width on Linear Mode Connectivity}
\author{Akira Ito$^1$\thanks{Work done while at NTT Social Informatics Laboratories.}~~, Masanori Yamada$^{2*}$, Daiki Chijiwa$^{3}$ \& Atsutoshi Kumagai$^{3}$ \\
$^1$Tohoku University ~ $^2$NTT DOCOMO, INC. ~ $^3$NTT Computer and Data Science Laboratories \\
\texttt{akira.ito.b1@tohoku.ac.jp}, \\
\texttt{\{masanori.yamada, daiki.chijiwa, atsutoshi.kumagai\}@ntt.com}
}
\begin{document}

\maketitle

\begin{abstract}
Recently, \citeauthor{Ainsworth_ICLR_2023} empirically demonstrated that, given two independently trained models, applying a parameter permutation that preserves the input–output behavior allows the two models to be connected by a low-loss linear path. When such a path exists, the models are said to achieve linear mode connectivity (LMC). Prior studies, including \citet{Ainsworth_ICLR_2023}, have reported that achieving LMC requires not only an appropriate permutation search but also sufficiently wide models (e.g., a 32 $\times$ width multiplier for ResNet-20). This is broadly believed to be because increasing the model width ensures a large enough space of candidate permutations, increasing the chance of finding one that yields LMC. In this work, we empirically demonstrate that, \emph{even without any permutations,} simply widening the models is sufficient for achieving LMC when using a suitable softmax temperature calibration. We further explain why this phenomenon arises by analyzing intermediate layer outputs. Specifically, we introduce layerwise exponentially weighted connectivity (LEWC), which states that the output of each layer of the merged model can be represented as an exponentially weighted sum of the outputs of the corresponding layers of the original models. Consequently the merged model's output matches that of an ensemble of the original models, facilitating LMC. To the best of our knowledge, this work is the first to show that widening the model not only facilitates \emph{nonlinear} mode connectivity, as suggested in prior research, but also significantly increases the possibility of achieving \emph{linear} mode connectivity.

%merged model to achieve performance comparable to the original models. 
%Furthermore, we demonstrate that this phenomenon is driven by two factors: the curse of dimensionality induced by width expansion, and the training dynamics of SGD with weight decay. Specifically, when models are sufficiently wide, the ReLU activations exhibit approximate linearity, and with SGD combined with weight decay, the weight matrices become low-rank. As a result, the functionality of both original models can be effectively embedded into the merged model, improving the performance of the merged model purely through width expansion. This work is the first to reveal that widening the model just not only facilitates \emph{nonlinear} mode connectivity, as suggested in prior research, but also significantly enhances the possibility of achieving \emph{linear} mode connectivity.

\end{abstract}

\section{Introduction} \label{sec:introduction}
Large neural networks (NNs) are widely used across various domains~\citep{Vaswani_NIPS_2017,Aaron_arxiv_2016,Wayne_arxiv_2023}, and optimizing their parameters constitutes a massive non-convex optimization problem. Remarkably, stochastic gradient descent (SGD), which is widely employed for NN training, is known to find good solutions despite its simplicity. One hypothesis proposed to explain this seemingly counterintuitive phenomenon is that the landscape of the loss function may be far simpler than previously thought. Several studies~\citep{Garipov_NIPS_2018,Draxler_ICML_2018,Freeman_ICLR_2017} have reported that different NN solutions can be connected through simple nonlinear paths with almost no increase in loss. Recently, \citet{Entezari_arxiv_2022} conjectured that, after accounting for all permutation symmetries in NNs, \cref{conj:perm_inv} may hold.

\begin{conjecture}[Permutation invariance, informal] \label{conj:perm_inv}
Let $\parama$ and $\paramb$ be the parameters of two models. When their model widths are sufficiently large, there exists a permutation $\pi$ such that the barrier between $\parama$ and $\pi(\paramb)$ (as defined in \cref{def:barrier}) becomes sufficiently small with high probability.
\end{conjecture}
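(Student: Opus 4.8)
The plan is to prove the conjecture by reducing it to a layerwise matching problem and then exploiting concentration of measure in the wide-width limit. I would first treat the simplest nontrivial case, a network with a single hidden layer of width $h$, and afterwards lift the argument to deeper networks by induction on depth. Writing the incoming weight vectors of the $h$ hidden neurons of each model as $\{\bm{w}^a_i\}_{i=1}^h$ and $\{\bm{w}^b_j\}_{j=1}^h$, the central observation is that, because both models share the same architecture and are trained on the same distribution, the empirical measures $\hat\mu_a=\frac1h\sum_i\delta_{\bm{w}^a_i}$ and $\hat\mu_b=\frac1h\sum_j\delta_{\bm{w}^b_j}$ should converge, as $h\to\infty$, to a common limiting neuron distribution $\mu$ in the mean-field sense. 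A permutation acts precisely as a coupling between these two empirical measures, so searching for a good $\pi$ is the same as searching for a cheap transport plan between $\hat\mu_a$ and $\hat\mu_b$.

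The key step is to construct $\pi$ via (approximate) optimal transport. I would take $\pi$ to be the permutation realizing the minimal matching cost $\min_\pi \frac1h\sum_i \|\bm{w}^a_i-\bm{w}^b_{\pi(i)}\|^2$, i.e.\ the empirical $2$-Wasserstein distance $W_2(\hat\mu_a,\hat\mu_b)$. Since both empirical measures approximate the same $\mu$, the triangle inequality together with quantitative Wasserstein convergence rates for empirical measures gives $W_2(\hat\mu_a,\hat\mu_b)\to 0$ with an explicit high-probability rate in $h$. Thus the matched pairs $(\bm{w}^a_i,\bm{w}^b_{\pi(i)})$ are close on average, and the outgoing weights are then matched by the same permutation.

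With the matching in hand, I would bound the barrier along the linear path $\bm{\theta}_t=(1-t)\parama+t\,\pi(\paramb)$. For each neuron the interpolated incoming weight $(1-t)\bm{w}^a_i+t\,\bm{w}^b_{\pi(i)}$ stays within the matching radius of $\bm{w}^a_i$, so by Lipschitzness of the activation the interpolated preactivations deviate from those of $\parama$ by a quantity controlled by $W_2(\hat\mu_a,\hat\mu_b)$. Averaging over the $h$ weakly dependent neurons and invoking a concentration inequality, the output of $\bm{\theta}_t$ should concentrate around the output of $\parama$ uniformly in $t$; feeding this into the (Lipschitz) loss yields a barrier that vanishes as $h\to\infty$, which is exactly the claimed statement holding with high probability.

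I expect the main obstacle to be the passage from one layer to many. Linear interpolation of parameters does \emph{not} commute with the composition of nonlinear layers, so a naive inductive bound accumulates error multiplicatively through depth and can blow up. Controlling this requires showing that the matched hidden representations remain aligned \emph{after} passing through each subsequent nonlinearity, essentially a stability-of-features argument, and that the activation statistics at every layer stay well-conditioned along the whole path rather than only at the endpoints. A secondary difficulty is that after training the neuron weights are neither independent nor identically distributed, so the clean mean-field convergence of $\hat\mu_a$ and $\hat\mu_b$ to a common $\mu$ must be replaced by a weaker hypothesis (for instance, that the two training runs induce asymptotically the same weight law up to permutation), and establishing \emph{that} hypothesis is the delicate, possibly intractable-in-full-generality core of the conjecture.
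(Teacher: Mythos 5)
This statement is not proved in the paper at all: it is Conjecture~\ref{conj:perm_inv}, an informal conjecture attributed to \citet{Entezari_arxiv_2022}, and it remains open. The paper only cites empirical evidence for it and then argues for a largely orthogonal claim (that at sufficient width no permutation is needed, via LEWC). So there is no ``paper proof'' to match your argument against; the relevant question is whether your sketch closes the conjecture, and it does not. Your plan --- match neurons by an optimal-transport permutation between the empirical measures $\hat\mu_a,\hat\mu_b$ of incoming weight vectors, bound the matching cost by $W_2(\hat\mu_a,\hat\mu_b)$, and push closeness of matched parameters through to a small barrier --- is essentially the strategy already pursued in the partial results of \citet{Entezari_arxiv_2022} (one hidden layer) and \citet{Ferbach_AISTATS_2023} (deeper networks under an independence assumption on weight vectors), and the obstacles you flag at the end are precisely where those works stop short of the full conjecture. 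Two of them are fatal as stated. First, the premise that $\hat\mu_a$ and $\hat\mu_b$ converge to a \emph{common} law $\mu$ for independently trained networks is not a technical lemma you can defer; it is the mathematical content of the conjecture. Trained weights are neither i.i.d.\ nor exchangeable samples from a fixed distribution, and no mean-field limit theorem currently guarantees that two SGD runs from different initializations induce the same neuron distribution. Second, even granting a common $\mu$, the quantitative Wasserstein rate you invoke degrades as $h^{-1/D}$ where $D$ is the ambient dimension of the weight vectors --- but $D$ is the width of the \emph{previous} layer, which also grows in the wide limit. The curse of dimensionality can therefore prevent $W_2(\hat\mu_a,\hat\mu_b)\to 0$ altogether unless the weights concentrate on a low-dimensional structure, which is an additional unproven hypothesis.

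There is also a mechanistic mismatch worth noting. Your argument rests on making $\parama$ and $\pi(\paramb)$ close in parameter space so that the interpolated network tracks model $a$. The empirical findings of this paper point to a different mechanism being operative at large width: the merged model does \emph{not} imitate either endpoint but behaves like an ensemble, because $\bmW^\pb_\ell \bmz^\pa_{\ell-1}\approx 0$ (reciprocal orthogonality, \cref{def:RO}) rather than $\bmW^\pa_\ell\approx\bmW^\pb_\ell$; and \cref{app:LLFC} shows the commutativity condition $(\bmW_\ell^\pa-\bmW_\ell^\pb)(\bmz_{\ell-1}^\pa-\bmz_{\ell-1}^\pb)\approx 0$, which your ``matched weights are close'' picture would imply, in fact fails without permutations. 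This does not refute your approach for the permuted setting, but it is evidence that closeness in parameter space is not the only (and perhaps not the primary) route to a small barrier, and that a proof built solely on transport-cost bounds may be attacking the conjecture through a mechanism that is not the one realized by actual SGD solutions. In short: the proposal is a reasonable research program consistent with the existing partial results, but the two gaps you yourself identify --- plus the dimension dependence of the $W_2$ rate, which you do not mention --- are exactly where the conjecture remains open, so this cannot be accepted as a proof.
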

Here, the barrier refers to the amount of loss increase when linearly interpolating between the weights of the two models. When the barrier between them is sufficiently small, they are said to exhibit linear mode connectivity (LMC)~\citep{frankle_ICML_2020}. \cref{conj:perm_inv} claims that many SGD solutions can be mapped into the same loss basin by applying an appropriate permutation. Indeed, several studies~\citep{Ainsworth_ICLR_2023,Singh_NIPS_2020} have experimentally validated this conjecture across a variety of datasets and models by employing effective permutation-finding techniques.% such as weight matching, straight-through estimators, and activation matching.

In previous studies~\citep{Entezari_arxiv_2022,Ainsworth_ICLR_2023,Signh_HiLD_2024,Ito_ICLR_2025,Ito_ICML_2025,Ferbach_AISTATS_2023}, it has been widely believed that sufficiently widening the model is required to find a permutation under which LMC holds. Intuitively, if the model is not sufficiently wide, the number of candidate permutations is limited, making it difficult to discover an appropriate permutation that brings the models into the same loss basin. For example, \citet{Ainsworth_ICLR_2023} empirically demonstrated that, without increasing ResNet-20's width by $32\times$ or VGG-16's by $4\times$, permutations fail to sufficiently reduce the loss barrier on the CIFAR-10 dataset. In addition, several studies have shown that widening ResNet-50 trained on the ImageNet dataset improves the test accuracy of the merged model when interpolating the weights of two trained models with permutations~\citep{Ainsworth_ICLR_2023}. The original conjecture~(\cref{conj:perm_inv}) also suggests that there may be no permutation that enables LMC unless the model is sufficiently wide.

In this paper, we empirically show that once the model is sufficiently wide, simply averaging the weights of independently trained models \emph{without applying any permutation} achieves test accuracy comparable to that of the original models. This finding implies that even without aligning models to the same loss basin via permutation, sufficiently widening the models naturally places them within the same basin in terms of test accuracy. Prior works~\citep{Nguyen_ICML_2019,Shevchenko_ICML_2020} have demonstrated that increasing model width facilitates the existence of nonlinear low-loss paths between trained models. However, to the best of our knowledge, no study has suggested that it also facilitates connectivity through linear paths. Our results suggest that widening the model itself may play a more critical role in achieving LMC than increasing the number of possible permutations.

Understanding the principle behind LMC is important not only for theoretical reasons, such as explaining the effectiveness of SGD in deep learning, but also for practical applications like model merging. Prior work \citep{Singh_NIPS_2020,Wang_ICLR_2020,Pena_CVPR_2023} has leveraged permutation symmetries in neural networks for model merging, federated learning, and continual learning. While weight averaging is known to work well for models fine-tuned from a shared foundation model, merging independently trained models remains challenging. By studying LMC without permutations, we suggest that a similar strategy may also be feasible in this setting.

\paragraph{Contributions}
The contributions of this paper are threefold:
%We summarize our key contributions as follows: width expansion enhances model merging performance by achieving \textbf{layerwise exponentially weighted connectivity (LEWC)}, which arises from the combined effects of the curse of dimensionality and SGD dynamics with weight decay. When LEFC holds, the intermediate layer outputs in the merged model can be expressed as weighted sums of the intermediate layer outputs of the original models, thereby facilitating LMC.

\paragraph{1. Widening improves the performance of merged models.}
We empirically show that just increasing the width of independently trained models monotonically improves the accuracy of their merged model without permutations, eventually matching the performance of the original models. Furthermore, we show that the loss barrier can be reduced to nearly zero by calibrating the softmax with an appropriate inverse temperature, thereby achieving LMC in this setting.

\paragraph{2. Revealing why increasing model width facilitates LMC.}
We introduce \textbf{layerwise exponentially weighted connectivity (LEWC)}, which states that the intermediate-layer outputs of the merged model can be expressed as an exponentially weighted average of the corresponding outputs of the two original models. Since LEWC implies that the merged model behaves like an ensemble of the two models in terms of predictive performance, it explains why LMC holds. We show that widening the model makes it more likely for LEWC to hold.
%A related concept, layerwise linear feature connectivity (LLFC), was proposed by \cite{Zhou_NIPS_2023}. However, the sufficient conditions required for LLFC are not satisfied in our setting. We therefore demonstrate that the fundamental reason why LMC emerges in our context differs from the conventional explanation.

\paragraph{3. The role of low-rank structure in achieving LMC.}
We further show that the low-rank structure of weight matrices plays a crucial role in achieving LEWC. Previous work on permutation-based model merging has pointed out that low-rankness of weights is essential for LMC~\citep{Ito_ICLR_2025,Yunis_arxiv_2024}. We demonstrate that this requirement also applies to LEWC by conducting experiments that vary the degree of weight decay. These results suggest that the possibility for LMC to hold depends strongly on properties of the solutions obtained by SGD.

%\paragraph{2. LLFC emerges without commutativity when $\lambda = 1/2$.}
%Our experiments reveal that wider models naturally exhibit layerwise linear feature connectivity (LLFC), as proposed by \citet{Zhou_NIPS_2023}, when the merging ratio is $\lambda = 1/2$. However, unlike their findings, the commutativity property, which was previously introduced as a sufficient condition, does not hold in our setting. This suggests that the mechanism underlying LLFC in our experiments is fundamentally different.
%
%\paragraph{3. Curse of dimensionality and SGD dynamics explain LMC.}
%Through a detailed analysis of intermediate layer activations, we show that width expansion gives rise to a distinct property called layerwise exponential feature connectivity (LEFC). LEFC reduces to LLFC when the merging ratio is $\lambda = 1/2$, and it is supported by two key factors. First, ReLU activations behave approximately linearly in high-dimensional regimes. Second, the trained weights tend to have low rank, which reduces interference between models and enables the merged model to effectively integrate the functionalities of both originals.
\section{Background and Preliminaries} \label{sec:background}

\subsection{Notation}
For a natural number $k \in \mathbb{N}$, we denote $[k] := \{1,2,\dots,k\}$. We use bold uppercase letters (e.g., $\bmX$) for tensors and matrices, and bold lowercase letters (e.g., $\bmx$) for vectors. Given a tensor $\bmX$, its vectorized form is written as $\mathrm{vec}(\bmX)$, and $\|\bmX\|$ indicates its Frobenius ($L^2$) norm.

Throughout this paper, we consider multilayer perceptrons (MLPs) $f(\bmx; \bm{\theta})$ with $L$ layers, though the analysis can be extended to other neural architectures. The input is $\bmx \in \mathbb{R}^{d_{\mathrm{in}}}$, and the parameter set is $\bm{\theta} \in \mathbb{R}^{d_{\mathrm{pa}}}$, where $d_{\mathrm{in}}$ and $d_{\mathrm{pa}}$ denote the input and parameter dimensions, respectively. Let $\bmz_\ell$ be the representation at the $\ell$-th layer, defined recursively as $\bmz_0 = \bmx$ and $\bmz_\ell = \sigma(\bmW_\ell \bmz_{\ell-1} + \bmb_\ell)$ for $\ell \in [L]$, where $\sigma$ is the activation, and $\bmW_\ell$, $\bmb_\ell$ are the weight and bias of layer $\ell$. When parameters need to be emphasized, we denote the $\ell$-th layer's output as $f_\ell(\bmx;\bm{\theta})$. The full parameter vector can be expressed as $\bm{\theta} = \bigoplus_{\ell=1}^L \left( \mathrm{vec}(\bmW_\ell) \mathbin{\oplus} \bmb_\ell \right)$ where $\oplus$ denotes concatenation.

\subsection{Linear Mode Connectivity (LMC)}
Let $\bm{\theta} \in \mathbb{R}^{d_{\mathrm{pa}}}$ be a model and $\mathcal{L}(\bm{\theta})$ its loss. For two models $\parama$ and $\paramb$, the \emph{loss barrier} is defined as:
\begin{definition}[Loss Barrier ~\citep{Entezari_arxiv_2022}] \label{def:barrier}
%The loss barrier between $\parama$ and $\paramb$ is
\begin{align}
B(\parama,\paramb) := \max_{\lambda \in [0,1]} \Big( \mathcal{L}(\lambda \parama + (1-\lambda)\paramb) - \big[ \lambda \mathcal{L}(\parama) + (1-\lambda)\mathcal{L}(\paramb) \big] \Big).
\end{align}
\end{definition}
Intuitively, $B(\parama,\paramb)$ measures how much the loss increases when linearly interpolating between the two models. If this barrier is nearly zero, we say $\parama$ and $\paramb$ are \emph{linearly mode connected}.

\subsection{Permutation Symmetry and LMC} \label{subsec:permutation_invariance}
Neural networks exhibit permutation symmetry in their parameter space. For the $\ell$-th layer of a network with parameters $\bm{\theta}$, permuting its outputs and compensating at the next layer leaves the input–output behavior unchanged. We denote this transformation by $\pi(\bm{\theta})$, where $\pi$ is a permutation.

\citet{Ainsworth_ICLR_2023} showed that two independently trained wide networks can achieve LMC by permutation symmetries through weight matching (WM), which finds permutations that minimize their $L^2$ distance. Intuitively, if $\parama \approx \pi(\paramb)$ for some $\pi$, then $\parama$ and $\pi(\paramb)$ can be seen as almost the same parameters, so their interpolation $\lambda \parama + (1-\lambda)\pi(\paramb)$ should preserve performance, where $\lambda$ is the merging (interpolation) coefficient. This view suggests that models scattered in parameter space may lie in a common loss basin once permutations are accounted for. However, prior works~\citep{Ainsworth_ICLR_2023,Entezari_arxiv_2022,Pena_CVPR_2023,Ito_ICLR_2025} indicate that WM requires very large widths (e.g., $32\times$ for ResNet-20, $4\times$ for VGG-16). In contrast, our results show that simply widening the models already improves merged performance monotonically, eventually matching the originals without permutation.

\section{Width Expansion Facilitates High-Accuracy Model Merging}

\citet{Ainsworth_ICLR_2023} experimentally demonstrated that, in permutation-based model merging, sufficiently wide models are required for LMC to hold. In this section, we empirically show that even without permutations, widening the models improves the performance of merged models. First, we confirm that widening increases test accuracy and reduces test loss. Next, we demonstrate that applying random permutations does not degrade performance. These results suggest that, once the models are sufficiently wide, permutations are no longer essential for achieving LMC.

\subsection{Test Accuracy and Loss}

\begin{figure}[t]
    \centering
    \includegraphics[width=\textwidth]{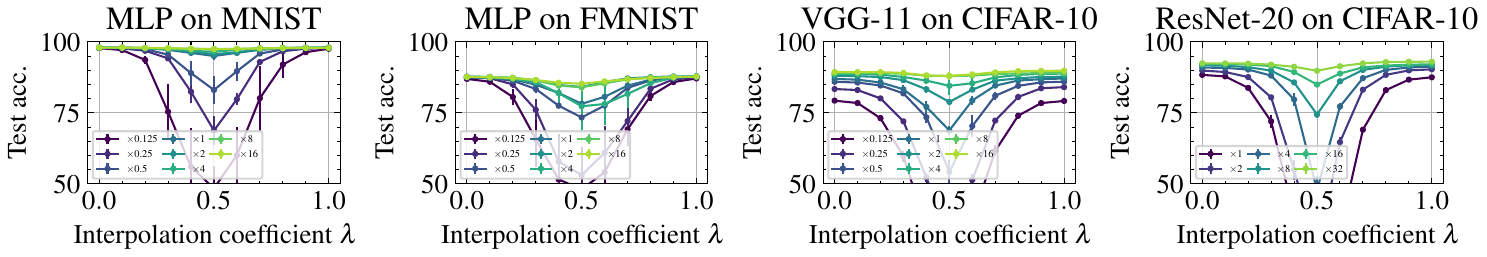}
    \caption{Test accuracies of merged models \textbf{without permutations} for different values of the interpolation coefficient $\lambda$. Even in the absence of permutations, increasing the width multiplier enables the merged models to reach accuracy comparable to the original models (corresponding to $\lambda = 0$ and $1$).}
    \label{fig:test_acc_merged_model}
\end{figure}

\begin{figure}[t]
    \centering
    \begin{subfigure}[b]{\textwidth}
        \centering
        \includegraphics[width=\textwidth]{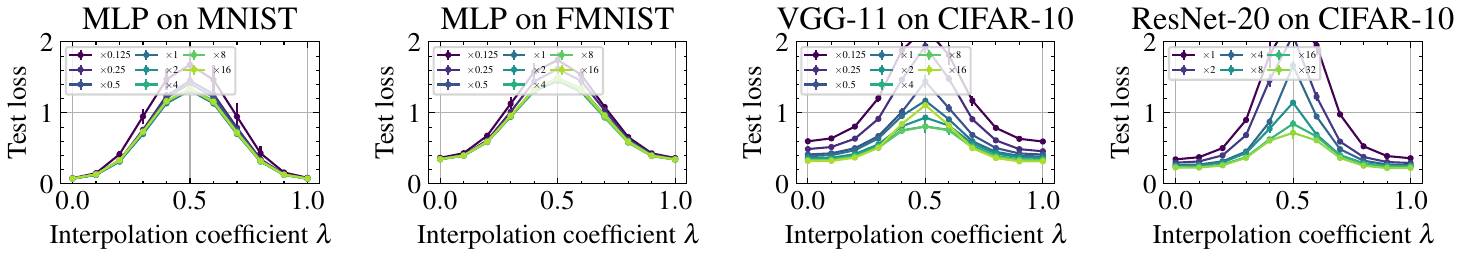}
        \caption{Test losses of merged models.}
        \label{subfig:test_loss_merged_model}
    \end{subfigure}
    \hfill
    \begin{subfigure}[b]{\textwidth}
        \centering
        \includegraphics[width=\textwidth]{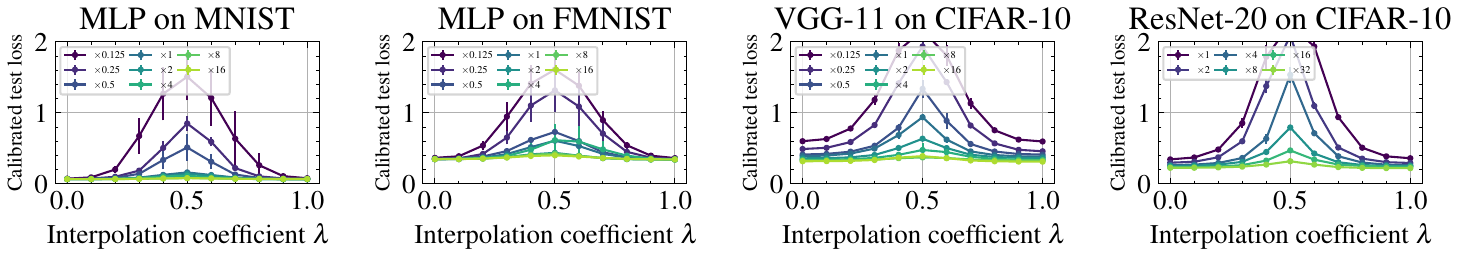}
        \caption{Calibrated test losses of merged models.}
        \label{subfig:calibrated_test_loss_merged_model}
    \end{subfigure}
    \hfill
    \caption{Test losses of merged models without permutations. \cref{subfig:test_loss_merged_model} shows the original loss values, while \cref{subfig:calibrated_test_loss_merged_model} shows the values obtained by applying temperature scaling.}
    \label{fig:test_loss_merged_model}
\end{figure}

%\begin{table}[t]
%\centering
%\scalebox{0.9}{
%\begin{tabular}{llllll}
%\toprule
%\multirow{2}{*}{Network} & \multirow{2}{*}{Dataset} & \multicolumn{2}{c}{Without perm}  & \multicolumn{2}{c}{With perm}       \\
%                         &                          & Acc. barrier    & Loss barrier    & Acc. barrier     & Loss barrier     \\ \hline
%\multirow{2}{*}{MLP}     & MNIST                    & 0.5187 ± 0.2246 & 0.0131 ± 0.0036 & -0.0271 ± 0.1387 & -0.0030 ± 0.0025 \\
%                         & FMNIST                   & 2.4667 ± 0.6847 & 0.0561 ± 0.0140 & 4.9250 ± 4.3792  & 0.1547 ± 0.1105  \\
%VGG-11                   & \multirow{2}{*}{CIFAR-10}                 & 1.3083 ± 1.5941 & 0.0662 ± 0.0544 & 7.0000 ± 3.6939  & 0.1768 ± 0.0390  \\
%ResNet-20                &                  & 2.6937 ± 1.4925 & 0.0865 ± 0.0506 & 5.1354 ± 2.9353  & 0.1734 ± 0.0992  \\ \bottomrule
%\end{tabular}
%}
%\end{table}
\cref{fig:test_acc_merged_model} shows the mean and standard deviation of test accuracies over three independent merges of two models.\footnote{Because LMC holds only when the models are sufficiently widened, our experiments are limited to relatively simple datasets such as CIFAR-10 and MNIST. In \cref{app:CIFAR100}, we also provide results on a more complex dataset, CIFAR-100, confirming that widening the model similarly removes the barrier.} For comparison, we also include results obtained with the permutations discovered by weight matching (WM) in \cref{fig:acc_merged_model_with_perm}. Details of the models, hyperparameters, and permutation search methods are provided in \cref{app:settings}. As shown in the figure, widening the models enables merged models to achieve accuracy comparable to the original ones, even without permutations.
\begin{table}[t]
\centering
\caption{\red{Barrier values ($\lambda = 1/2$) with and without the permutation found by WM. Loss barriers use calibrated losses in both settings. Width multipliers are $16\times$  for MLP, $16\times$ for VGG-11, and $32\times$ for ResNet-20. For sufficiently wide models, barriers are small even without the permutation.}}
\label{tab:barrier}
\scalebox{0.9}{
\begin{tabular}{llrrrr}
\toprule
\multirow{2}{*}{Network} & \multirow{2}{*}{Dataset}
& \multicolumn{2}{c}{Without perm}
& \multicolumn{2}{c}{With perm} \\
\cmidrule(lr){3-4}\cmidrule(lr){5-6}
&
& Acc. barrier (\%) & Loss barrier
& Acc. barrier (\%) & Loss barrier \\
\midrule

\multirow{2}{*}{MLP} 
& MNIST  
& $0.519 \pm 0.225$ & $0.013 \pm 0.004$
& $-0.027 \pm 0.139$ & $-0.003 \pm 0.003$ \\
& FMNIST 
& $2.467 \pm 0.685$ & $0.056 \pm 0.014$
& $4.925 \pm 4.379$ & $0.155 \pm 0.111$ \\

\midrule

VGG-11    
& \multirow{2}{*}{CIFAR-10}
& $1.308 \pm 1.594$ & $0.066 \pm 0.054$
& $7.000 \pm 3.694$ & $0.177 \pm 0.039$ \\
ResNet-20 
&
& $2.694 \pm 1.493$ & $0.087 \pm 0.051$
& $5.135 \pm 2.935$ & $0.173 \pm 0.099$ \\

\bottomrule
\end{tabular}
}
\end{table}

\cref{fig:test_loss_merged_model} presents the test losses of merged models. As observed in \cref{subfig:test_loss_merged_model}, the test loss does not decrease sufficiently to match that of the original models, although \cref{fig:test_acc_merged_model} shows that test accuracy monotonically increases with width. This discrepancy arises because certain transformations of the output distribution can change the loss without affecting accuracy. A concrete example is scaling the logits by an inverse temperature, which alters the cross-entropy loss but leaves the predicted labels unchanged. Since accuracy is the primary objective in classification, it is reasonable to evaluate the loss under an optimal inverse temperature. Accordingly, we estimated the inverse temperature using 20\% of the test set and computed calibrated losses on the remaining 80\%. The calibrated results are shown in \cref{subfig:calibrated_test_loss_merged_model}. With this adjustment, the loss barrier approaches zero as width increases, as expected. In this sense, when we state that LMC holds with a zero loss barrier, we also include the case where the softmax is calibrated with an inverse temperature. \red{For a more detailed view, \cref{tab:barrier} reports the barrier values separately for the cases with and without applying the permutation. We find that the barriers remain very small even without the permutation.}
%\footnote{Without inverse temperature correction, the discrepancy arises from the constant $c$ in \cref{def:LLFC}. As shown in \cref{sec:analysis}, when models are sufficiently wide, LLFC holds under $c < 1$, implying that merged outputs are scaled down relative to the originals. Inverse temperature correction compensates for this effect.}

\subsection{Random Permutations}
As discussed above, sufficiently wide models achieve performance comparable to the originals even without permutations. This suggests that permutations are not essential once the models are sufficiently wide. To test this further, we applied random permutations before merging two models at $\lambda = 1/2$. As shown in \cref{fig:test_acc_merged_model_with_rand_perm}, the merged models still maintain high accuracy, indicating that permutations are not critical when models are sufficiently wide.

\section{Expanding Model Width Achieves Layerwise Exponentially Weighted Connectivity} \label{sec:analysis}

In this section, we aim to explain why widening models facilitates LMC. To this end, we introduce the concept of \emph{layerwise exponentially weighted connectivity} (LEWC) in \cref{subsec:LEWC}. When LEWC holds, the intermediate outputs of the merged model can be expressed as exponentially weighted combinations of the corresponding outputs from the original models. Consequently, the output of the merged model becomes equivalent to that of an ensemble of the original models, thereby achieving LMC. We then empirically examine in \cref{subsec:exp_LEWC} whether LEWC actually holds at the merging ratio $\lambda=1/2$, and show that widening the model makes LEWC more likely to be satisfied. A more fundamental explanation of why LEWC emerges will be provided in the next section.

\subsection{Layerwise Exponentially Weighted Connectivity} \label{subsec:LEWC}

To clarify why widening models enables LMC, we introduce the following key concept.

\begin{definition}[Layerwise Exponentially Weighted Connectivity] \label{def:LEWC}
    Two models with parameters $\parama$ and $\paramb$ are said to be \emph{layerwise exponentially weighted connected} if, for every layer $\ell \in [L]$ and any $\lambda \in [0, 1]$, we have
    \begin{align} \label{eq:LEWC}
        f_\ell(\bmx; \lambda \parama + (1-\lambda) \paramb) = \lambda^\ell f_\ell(\bmx; \parama) + (1-\lambda)^\ell f_\ell(\bmx; \paramb) \quad \text{almost surely}.
    \end{align}
\end{definition}
When \cref{def:LEWC} holds, the intermediate output of the merged model $\paramc = \lambda \parama + (1-\lambda)\paramb$ at layer $\ell$ is expressed as a weighted sum of the original models' outputs, where the coefficients decay exponentially with depth as $\lambda^\ell$ and $(1-\lambda)^\ell$. This also applies to the last layer, yielding $f_L(\bmx; \paramc) = \lambda^L f_L(\bmx; \parama) + (1-\lambda)^L f_L(\bmx; \paramb)$. For classification tasks, since scaling the logits by a positive constant does not change the predicted labels, we divide the right-hand side by $\lambda^L + (1-\lambda)^L$ to normalize the coefficients into weights that sum to one. In this way, the merged model can be interpreted as an ensemble that uses a weighted average of the two models' logits with weights $\lambda^L/(\lambda^L + (1-\lambda)^L)$ and $(1-\lambda)^L/(\lambda^L + (1-\lambda)^L)$. Thus, in terms of accuracy, LEWC directly implies LMC (i.e., no accuracy degradation). 

On the other hand, for the loss, when $\lambda$ is close to $1/2$, the exponential decay factors $\lambda^L$ and $(1-\lambda)^L$ may cause the loss to increase. However, this effect can be mitigated by applying a temperature-scaled softmax with an appropriate inverse temperature. As shown in \cref{subfig:calibrated_test_loss_merged_model}, with suitable calibration the loss barrier can also be reduced to nearly zero.

\paragraph{Relation to layerwise linear feature connectivity.} \cite{Zhou_NIPS_2023} introduced \emph{layerwise linear feature connectivity} (LLFC), a concept related to LEWC. LLFC explains why LMC holds for permutation-based methods and spawning. LLFC states that, for each layer, the output of the merged model can be expressed as a weighted average of the outputs of the two original models. As sufficient conditions for LLFC, \cite{Zhou_NIPS_2023} proposed \emph{weak additivity for ReLU activations} and a \emph{commutativity property}. In our setting, we show in \cref{app:LLFC} that the commutativity property does not hold. This is why, in this work, we introduce LEWC as a concept distinct from LLFC.

\subsection{Empirical Verification} \label{subsec:exp_LEWC}

\begin{figure}[t]
    \centering 
    \includegraphics[width=\linewidth]{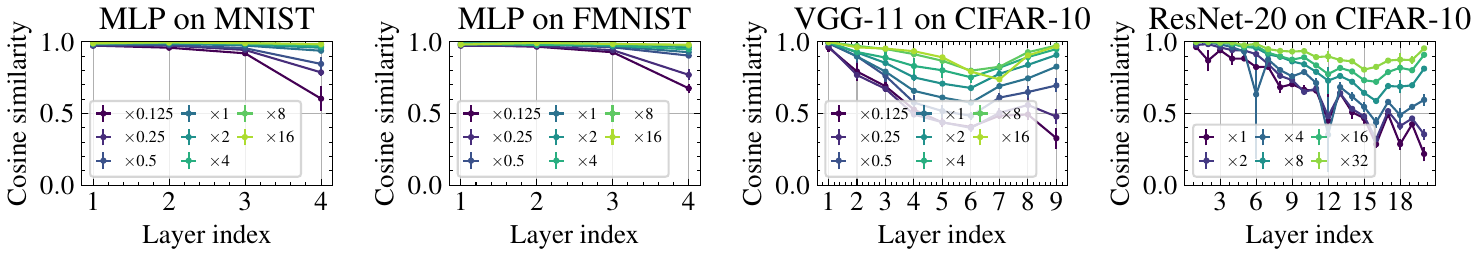}
    \caption{Average cosine similarity between $f_\ell(\bmx; (\parama + \paramb)/2)$ and $(f_\ell(\bmx;\parama) + f_\ell(\bmx;\paramb))/2$ for each layer when test data are fed into the models. For the last layer, cosine similarity is computed between the logits. The color of each plot indicates the degree of width expansion. Wider models exhibit higher cosine similarity, making it easier for LEWC to hold.}
    \label{fig:cos_sim_LLFC}
\end{figure}

We next empirically verify whether LEWC holds at $\lambda = 1/2$. In this case, \cref{eq:LEWC} reduces to $f_\ell(\bmx; (\parama + \paramb)/2) = (1/2)^\ell \big(f_\ell(\bmx; \parama) + f_\ell(\bmx; \paramb)\big)$. Accordingly, we measured the cosine similarity between $f_\ell(\bmx; (\parama + \paramb)/2)$ and $(f_\ell(\bmx; \parama) + f_\ell(\bmx; \paramb))/2$, where the factor $(1/2)^\ell$ can be omitted since cosine similarity is invariant to positive scaling. The results are shown in \cref{fig:cos_sim_LLFC}. Across all models, increasing the width consistently improves cosine similarity at each layer. In particular, when the width is sufficiently large, the cosine similarity at the last layer becomes close to 1, indicating that the merged model's output is almost identical to that of the ensemble of the two original models. This explains why the merged model achieves high test accuracy.

\section{Why Does LEWC Emerge in the Wide-Width Regime?}

In this section, we clarify why widening the model leads to the satisfaction of LEWC. First, in \cref{subsec:suff_conds}, we introduce, as sufficient conditions for LEWC, (1) weak additivity for ReLU activations and (2) reciprocal orthogonality. When (1) and (2) are satisfied, LEWC holds. Therefore, in \cref{subsec:check_add_ReLU,subsec:check_RO}, we empirically verify whether conditions (1) and (2) are satisfied, and further reveal that the reasons for their validity are due to the low-rank structure of the weights. In other words, conditions (1) and (2) do not hold when the weights are not low-rank, in which case neither LEWC nor LMC holds. Previous studies~\citep{Ito_ICLR_2025,Tomer_arxiv_2023} have pointed out that weakening weight decay during SGD training increases the rank of the weights. Therefore, in \cref{subsec:weight_decay}, we empirically demonstrate that by indirectly increasing the rank of the weight matrices through weakening weight decay, both LEWC and LMC do not hold. This clarifies that the rank of the weight matrices strongly influences the realization of LEWC and LMC.

\subsection{Sufficient Conditions for LEWC} \label{subsec:suff_conds}

In this section, we introduce, as sufficient conditions for LEWC, (1) weak additivity for ReLU activations and (2) reciprocal orthogonality. First, (1) is defined as follows.
\begin{definition}[Weak Additivity for ReLU Activations~\citep{Zhou_NIPS_2023}] \label{def:add_ReLU}
Let $\tilde{\bmz}^\pa_\ell$ and $\tilde{\bmz}^\pb_\ell$ be the $\ell$-th layer pre-activations in two models, respectively. These models satisfy \emph{weak additivity for ReLU activations} if, for every layer $\ell \in [L]$ and any $\lambda \in [0, 1]$, $\sigma(\lambda \tilde{\bmz}_\ell^\pa + (1-\lambda) \tilde{\bmz}_\ell^\pb) = \lambda \sigma(\tilde{\bmz}_\ell^\pa) + (1-\lambda) \sigma(\tilde{\bmz}_\ell^\pb)$ almost surely, where $\sigma$ denotes the ReLU activation function.
\end{definition}
This property implies that the ReLU activation behaves linearly with respect to the pre-activations along the interpolation path between the two models.

The other condition, reciprocal orthogonality, is defined as follows.
\begin{definition}[Reciprocal Orthogonality] \label{def:RO}
    We say that two parameters $\parama$ and $\paramb$ satisfy reciprocal orthogonality if, for every hidden layer $\ell \in \{2, 3, \dots, L\}$, we have $\bmz_{\ell-1}^\pa \in \ker \bmW_\ell^\pb$ and $\bmz_{\ell-1}^\pb \in \ker \bmW_\ell^\pa$ almost surely. Equivalently, $\bmW^\pb_\ell \bmz_{\ell-1}^\pa = 0$ and $\bmW^\pa_\ell \bmz_{\ell-1}^\pb = 0$.
\end{definition}
Reciprocal orthogonality means that multiplying the activations input to layer $\ell$ of one model by the weights at layer $\ell$ of the other model yields zero. When both weak additivity for ReLU activations and reciprocal orthogonality are satisfied, we can derive the following theorem:
\begin{theorem} \label{th:LEWC}
    For two bias-free models $\parama$ and $\paramb$, if \cref{def:add_ReLU,def:RO} hold, then LEWC is satisfied.
\end{theorem}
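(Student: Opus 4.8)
The plan is to prove LEWC by induction on the layer index $\ell$, tracking how the merged parameters $\paramc = \lambda \parama + (1-\lambda)\paramb$ propagate a signal forward. Throughout I would write $\bmz_\ell^\pa = f_\ell(\bmx;\parama)$, $\bmz_\ell^\pb = f_\ell(\bmx;\paramb)$, $\bmz_\ell^\pc = f_\ell(\bmx;\paramc)$ for the post-activations, and $\tilde{\bmz}_\ell^\pa = \bmW_\ell^\pa \bmz_{\ell-1}^\pa$ (and analogously for $\pb, \pc$) for the pre-activations, using that the models are bias-free so that $\bmW_\ell^\pc = \lambda \bmW_\ell^\pa + (1-\lambda)\bmW_\ell^\pb$. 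The inductive claim is exactly \cref{eq:LEWC}: $\bmz_\ell^\pc = \lambda^\ell \bmz_\ell^\pa + (1-\lambda)^\ell \bmz_\ell^\pb$, and since every hypothesis holds almost surely, so will the conclusion.

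For the base case $\ell = 1$, all three networks share the input $\bmz_0 = \bmx$, so the merged pre-activation is $\tilde{\bmz}_1^\pc = \bmW_1^\pc \bmx = \lambda \tilde{\bmz}_1^\pa + (1-\lambda)\tilde{\bmz}_1^\pb$, a genuine convex combination. Applying weak additivity (\cref{def:add_ReLU}) directly yields $\bmz_1^\pc = \lambda \bmz_1^\pa + (1-\lambda)\bmz_1^\pb$, the $\ell=1$ instance. Reciprocal orthogonality is not invoked here, consistent with its being stated only for hidden layers $\ell \ge 2$, because both models read the identical input.

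For the inductive step I would assume the claim at layer $\ell-1$ and compute $\tilde{\bmz}_\ell^\pc = \bmW_\ell^\pc \bmz_{\ell-1}^\pc$. Substituting $\bmW_\ell^\pc = \lambda \bmW_\ell^\pa + (1-\lambda)\bmW_\ell^\pb$ together with the inductive hypothesis $\bmz_{\ell-1}^\pc = \lambda^{\ell-1}\bmz_{\ell-1}^\pa + (1-\lambda)^{\ell-1}\bmz_{\ell-1}^\pb$ and expanding the product produces four terms; the two cross terms carry the factors $\bmW_\ell^\pa \bmz_{\ell-1}^\pb$ and $\bmW_\ell^\pb \bmz_{\ell-1}^\pa$, which vanish by reciprocal orthogonality (\cref{def:RO}). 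This leaves the clean identity $\tilde{\bmz}_\ell^\pc = \lambda^\ell \tilde{\bmz}_\ell^\pa + (1-\lambda)^\ell \tilde{\bmz}_\ell^\pb$.

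The main obstacle — the one place where care is genuinely required — is applying the activation to this pre-activation, because the coefficients $\lambda^\ell$ and $(1-\lambda)^\ell$ do not sum to one, so weak additivity (stated for convex combinations) cannot be invoked directly. I would resolve this by coupling weak additivity with the positive homogeneity of ReLU, namely $\sigma(c\vv) = c\,\sigma(\vv)$ for $c \ge 0$. Setting $s := \lambda^\ell + (1-\lambda)^\ell$, which is strictly positive for every $\lambda \in [0,1]$, I would factor $\sigma\bigl(\lambda^\ell \tilde{\bmz}_\ell^\pa + (1-\lambda)^\ell \tilde{\bmz}_\ell^\pb\bigr) = s\,\sigma\bigl(\mu \tilde{\bmz}_\ell^\pa + (1-\mu)\tilde{\bmz}_\ell^\pb\bigr)$ with $\mu := \lambda^\ell / s \in [0,1]$. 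The argument is now a convex combination, so weak additivity applies and gives $s\bigl(\mu \sigma(\tilde{\bmz}_\ell^\pa) + (1-\mu)\sigma(\tilde{\bmz}_\ell^\pb)\bigr) = \lambda^\ell \bmz_\ell^\pa + (1-\lambda)^\ell \bmz_\ell^\pb$. Hence $\bmz_\ell^\pc = \lambda^\ell \bmz_\ell^\pa + (1-\lambda)^\ell \bmz_\ell^\pb$, which closes the induction and establishes \cref{def:LEWC}.
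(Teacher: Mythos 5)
Your proof is correct and follows essentially the same route as the paper's: induction on $\ell$, expansion of $(\lambda \bmW_\ell^\pa + (1-\lambda)\bmW_\ell^\pb)\big(\lambda^{\ell-1}\bmz_{\ell-1}^\pa + (1-\lambda)^{\ell-1}\bmz_{\ell-1}^\pb\big)$ with the cross terms eliminated by reciprocal orthogonality, followed by weak additivity of the ReLU. You are in fact more careful than the paper at the one delicate point---the coefficients $\lambda^\ell$ and $(1-\lambda)^\ell$ are not convex weights, and your use of the positive homogeneity of $\sigma$ to renormalize by $s=\lambda^\ell+(1-\lambda)^\ell$ before invoking \cref{def:add_ReLU} makes explicit a step that the paper's derivation passes over silently.
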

The proof of \cref{th:LEWC} is shown in \cref{app:proof_main_theorem}. Regarding the assumption that biases can be ignored, in models such as ResNet and VGG that employ batch normalization after every convolutional layer, the effect of biases can indeed be considered negligible. Moreover, the exponential decay in activation norms is compensated for by the normalization step. Therefore, from \cref{th:LEWC}, LMC can be achieved when \cref{def:add_ReLU} and \cref{def:RO} are satisfied. Thus, in the following two subsections, we empirically confirm whether these two conditions actually hold.

\subsection{Empirical verification of weak additivity for ReLU activations} \label{subsec:check_add_ReLU}

\begin{figure}[tb]
    \centering
    \includegraphics[width=\linewidth]{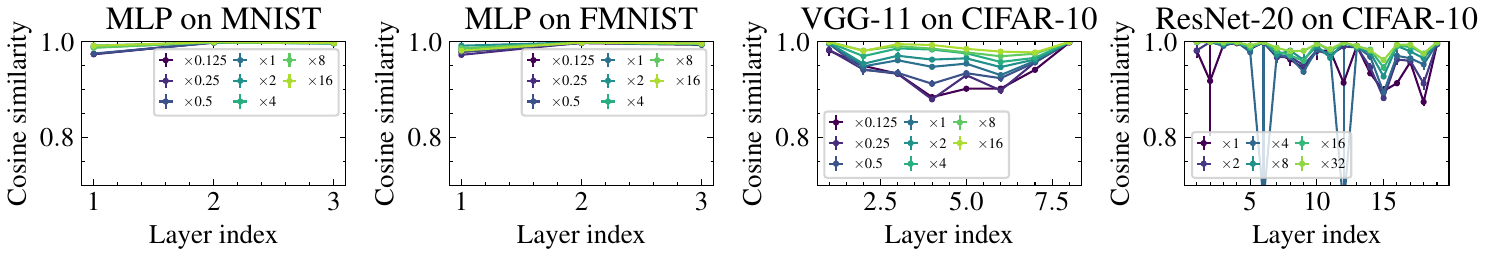}
    \caption{Average cosine similarity between $\sigma((\tilde{\bmz}_\ell^\pa + \tilde{\bmz}_\ell^\pb)/2)$ and $(\sigma(\tilde{\bmz}_\ell^\pa) + \sigma(\tilde{\bmz}_\ell^\pb))/2$, where $\tilde{\bmz}_\ell^\pa$ and $\tilde{\bmz}_\ell^\pb$ are the pre-activations of the $\ell$-th layer of two models. Different colors indicate different width expansion factors. The results indicate high cosine similarity for all layers.}
    \label{fig:relu_add}
\end{figure}
We first verify that ReLU activations behave approximately linearly in sufficiently wide models. \cref{fig:relu_add} shows cosine similarity results used to evaluate ReLU linearity. As seen in the figure, the similarity increases with width, indicating improved linearity.

In the following, we explain from two perspectives why ReLU appears approximately linear when the width is increased. The first reason is the effect of the curse of dimensionality due to the increase in the dimensionality of the intermediate layer outputs. Regarding this point, through an analysis using Gaussian random vectors, we clarify the effect of increasing dimensionality on the ReLU function. The second reason is that when the weights of a trained model become low-rank by widening models, the active neurons of the two models will not overlap. From these two reasons, it is desirable that the width is large and the weights are low-rank to achieve the weak additivity for ReLU activations.

\paragraph{Curse of dimensionality on ReLU activations.} In high dimensions, two Gaussian random vectors $\bmu$ and $\bmv$ yield high cosine similarity between $\sigma(\bmu+\bmv)$ and $\sigma(\bmu)+\sigma(\bmv)$ (approximately 0.93).
\begin{theorem} \label{th:gaussian_relu}
    Let $\bmu, \bmv \sim \mathcal{N}(\bm{0}, \bmI_d)$ be two Gaussian random vectors in $\mathbb{R}^d$. Then, with probability at least $1-3\delta$, for every real number $\epsilon$ satisfying $\epsilon \geq K\max\!\left(\sqrt{\tfrac{1}{cd}\log(2/\delta)}, \tfrac{1}{cd}\log(2/\delta)\right)$,
    \begin{align}
        \frac{\frac{3}{4}+\frac{1}{\pi}-\epsilon}{\sqrt{\left(1+\epsilon\right)\left(1+\frac{1}{\pi}+\epsilon\right)}} 
        \leq \frac{(\sigma(\bmu) + \sigma(\bmv))^\top \sigma(\bmu + \bmv)}{\|\sigma(\bmu) + \sigma(\bmv)\|\|\sigma(\bmu + \bmv)\|} 
        \leq \frac{\frac{3}{4}+\frac{1}{\pi}+\epsilon}{\sqrt{\left(1-\epsilon\right)\left(1+\frac1\pi -\epsilon\right)}},
    \end{align}
    where $c$ is a constant and $K=32/3$.
\end{theorem}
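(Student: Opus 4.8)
The plan is to exploit the fact that the coordinates of $\bmu$ and $\bmv$ are i.i.d., so that the numerator and the two squared norms appearing in the cosine similarity are each sums of $d$ i.i.d.\ scalar random variables. Writing $u,v \sim \mathcal{N}(0,1)$ for a single coordinate pair, I would first compute the three per-coordinate expectations. The key observation is that $(u,\,u+v)$ is jointly Gaussian with variances $1$ and $2$ and correlation $\rho = 1/\sqrt{2}$, i.e.\ angle $\theta = \arccos(1/\sqrt{2}) = \pi/4$. Using the standard rectified-Gaussian (arc-cosine kernel) identity $\E[\sigma(X)\sigma(Y)] = \sigma_X\sigma_Y\frac{1}{2\pi}\big(\sin\theta + (\pi-\theta)\cos\theta\big)$ for mean-zero jointly Gaussian $(X,Y)$, a direct evaluation gives $\E[\sigma(u)\sigma(u+v)] = \frac{1}{2\pi} + \frac{3}{8}$, hence $\E[(\sigma(u)+\sigma(v))\sigma(u+v)] = \frac{3}{4} + \frac{1}{\pi}$ by symmetry. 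Similarly $\E[\sigma(u+v)^2] = \frac{1}{2}\E[(u+v)^2] = 1$, and using independence of $u,v$ together with $\E[\sigma(u)] = 1/\sqrt{2\pi}$ and $\E[\sigma(u)^2] = 1/2$, I get $\E[(\sigma(u)+\sigma(v))^2] = 1 + \frac{1}{\pi}$. These three constants are exactly the centers of the bounds in the statement.

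Next I would establish concentration of the three sums around $d$ times these means. Each summand is either a product of two ReLU-of-Gaussian terms or a square of such a term; since $\sigma(u)+\sigma(v)$ and $\sigma(u+v)$ are Lipschitz functions of Gaussians (hence sub-Gaussian), their products and squares are sub-exponential with an absolute sub-exponential norm. Bernstein's inequality for sums of i.i.d.\ sub-exponential variables then yields, for each of the three normalized sums $S/d$ with $S \in \{N, A, B\}$ denoting the numerator $(\sigma(\bmu)+\sigma(\bmv))^\top\sigma(\bmu+\bmv)$, $\|\sigma(\bmu)+\sigma(\bmv)\|^2$, and $\|\sigma(\bmu+\bmv)\|^2$,
\begin{align}
\mathbb{P}\!\left(\left|\tfrac{1}{d}(S - \E S)\right| \ge \epsilon\right) \le 2\exp\!\left(-cd\min\!\left(\tfrac{\epsilon^2}{K^2}, \tfrac{\epsilon}{K}\right)\right).
\end{align}
Setting the right-hand side equal to $\delta$ and solving for $\epsilon$ produces exactly the stated admissible range $\epsilon \ge K\max\!\big(\sqrt{\tfrac{1}{cd}\log(2/\delta)},\, \tfrac{1}{cd}\log(2/\delta)\big)$, with the constant $K = 32/3$ absorbing the sub-exponential norms of the three summand types.

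Finally I would take a union bound over the three deviation events, so that with probability at least $1-3\delta$ all three hold simultaneously: $N/d \in [\frac{3}{4}+\frac{1}{\pi}-\epsilon,\ \frac{3}{4}+\frac{1}{\pi}+\epsilon]$, $A/d \in [1+\frac{1}{\pi}-\epsilon,\ 1+\frac{1}{\pi}+\epsilon]$, and $B/d \in [1-\epsilon,\ 1+\epsilon]$. Writing the cosine similarity as $\frac{N/d}{\sqrt{A/d}\,\sqrt{B/d}}$ and bounding the numerator from below (resp.\ above) while bounding the denominator from above (resp.\ below) then delivers the two claimed inequalities directly.

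I would expect the main obstacle to be twofold. First, the cross-moment $\E[\sigma(u)\sigma(u+v)]$ requires the bivariate rectified-Gaussian formula with unequal variances and correlation $1/\sqrt{2}$; extracting the clean value $\frac{3}{4}+\frac{1}{\pi}$, which drives the entire statement, is the delicate computational core. Second, to obtain the explicit constant $K = 32/3$ one must carefully bound the sub-exponential norms of the three distinct summand types and track how they enter Bernstein's inequality; the concentration argument itself is otherwise routine.
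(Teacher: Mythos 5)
Your proposal is correct and follows essentially the same route as the paper: reduce to three i.i.d.\ coordinate sums, compute the per-coordinate means $\tfrac34+\tfrac1\pi$, $1+\tfrac1\pi$, and $1$, apply Bernstein's inequality to the sub-exponential summands with $K=32/3$, and union-bound over the three events. The only cosmetic difference is that you evaluate the cross-moment via the arc-cosine kernel formula, whereas the paper derives the same value from $\sigma(x)=(x+|x|)/2$ and the classical identity $\E|x||y|=\tfrac2\pi\bigl(\sqrt{1-\rho^2}+\rho\arcsin\rho\bigr)$.
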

As $d$ grows large, $\epsilon \to 0$, so the cosine similarity converges in probability to $(3/4+1/\pi)/\sqrt{1+1/\pi} \approx 0.93$. Thus, in high dimensions, ReLU behaves almost linearly. While real neural network pre-activations are not Gaussian, this suggests that similar effects arise in practice.

\paragraph{Low-rank structure reduces overlap among active neurons.} The Gaussian argument implies that a cosine similarity of about 0.93 can be achieved when the dimension is sufficiently large; however, \cref{fig:relu_add} shows cosine similarities higher than 0.93 for wide models, so dimensionality alone does not fully explain the approximate linearity of ReLU. An additional reason is the low-rank structure of weight matrices induced by widening. To illustrate the basic idea, let $\tilde{\bmz}_\ell^\pa$ and $\tilde{\bmz}_\ell^\pb$ be the pre-activations at some layer $\ell$ in the two models. Let $d_\ell$ be the dimensionality at this layer, and for simplicity assume $d_\ell$ is even. Suppose that in $\tilde{\bmz}_\ell^\pa$, all coordinates except the first $d_\ell/2$ are zero (i.e., $\forall i\in \{d_\ell/2+1, d_\ell/2+2, \dots, d_\ell\}, \tilde{\bmz}_{\ell, i}^\pa = 0$), and in $\tilde{\bmz}_\ell^\pb$, all coordinates except the last $d_\ell/2$ are zero (i.e., $\forall i\in \{1, 2, \dots, d_\ell/2,\}, \tilde{\bmz}_{\ell, i}^\pb = 0$). Then we can show that $\sigma(\tilde{\bmz}_\ell^\pa + \tilde{\bmz}_\ell^\pb) = \sigma(\tilde{\bmz}_\ell^\pa) + \sigma(\tilde{\bmz}_\ell^\pb)$ because, for any $i$, $\sigma(\tilde{\bmz}_{\ell,i}^\pa + \tilde{\bmz}_{\ell,i}^\pb) = \sigma(\tilde{\bmz}_{\ell,i}^\pa)$ if $i \leq d_\ell/2$, and $\sigma(\tilde{\bmz}_{\ell,i}^\pb)$ otherwise.
On the other hand, a similar relation also holds for $\sigma(\tilde{\bmz}_{\ell,i}^\pa) +\sigma(\tilde{\bmz}_{\ell,i}^\pb)$. Thus, in this case, ReLU function behaves linearly.
\begin{figure}[t]
    \centering
    \includegraphics[width=\linewidth]{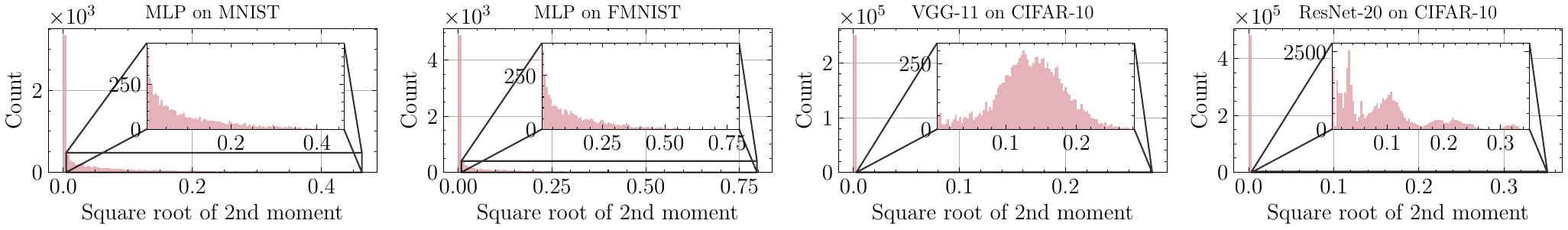}
    \caption{Histogram of the square root of the second (uncentered) moment of ReLU inputs in the second hidden layer (i.e., $\sqrt{\mathbb{E}\tilde{\bmz}_{\ell,i}^2}$). We present results for MLPs with width $\times 16$, a VGG-11 scaled by $\times 16$, and a ResNet-20 scaled by $\times 32$. Most dimensions fall into the leftmost bin, indicating that only a small number of dimensions are active. Results for all layers are provided in \cref{app:relu_input_std_dev_all_layer}.}
    \label{fig:input_dev_hist}
\end{figure}

\begin{figure}
    \centering
    \includegraphics[width=\linewidth]{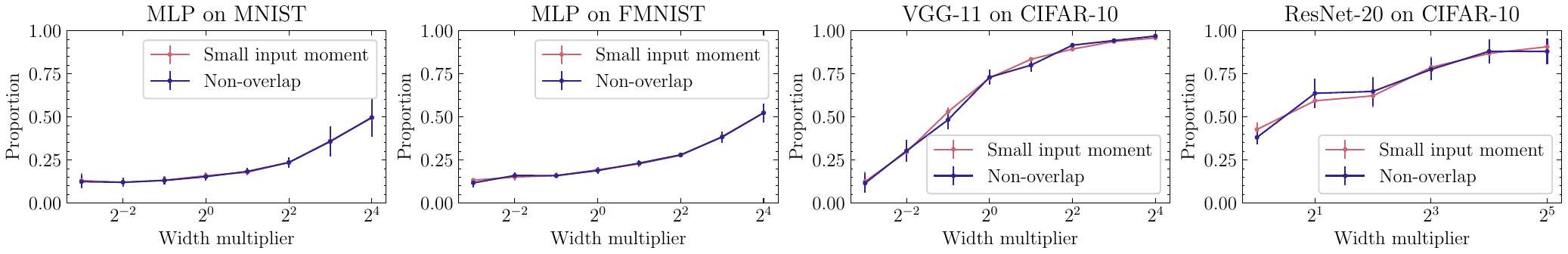}
    \caption{Proportion of input dimensions with a small square root of the second moment (``Small input moment'') and the proportion of non-overlapping large-moment dimensions (``Non-overlap'') between two trained models for the second ReLU layer. Here, ``Small input moment'' refers to dimensions whose second moment is smaller than one hundredth of the maximum second moment across all dimensions, corresponding to the leftmost bin in \cref{fig:input_dev_hist}. As the model width increases, the fraction of small-moment dimensions grows and the overlap decreases.}
    \label{fig:input_dev}
\end{figure}

From the above considerations, for the pre-activations at each layer of the two models, having a smaller overlap among the dimensions whose magnitudes deviate significantly from zero makes the linearity of ReLU more likely to hold. In particular, this is more likely when each weight has low rank. This is because one model's pre-activation is given by $\tilde{\bmz}_\ell^\pa = \bmW_\ell^\pa \bmz_{\ell-1}^\pa$, and if many coordinates of $\tilde{\bmz}_\ell^\pa$ are close to zero regardless of the input, then $\bmW_\ell^\pa$ must have low rank. If the rank of $\bmW_\ell^\pa$ is large, then when regarding $\bmW_\ell^\pa$ as a linear mapping, the dimension of its output space is large, so the number of coordinates near zero necessarily decreases. \citet{Ito_ICLR_2025} showed that as the model width increases, the rank of each layer's weight matrix becomes relatively small compared to the width, suggesting that widening the model induces low-rank weights and makes the linearity of ReLU more feasible.\footnote{\red{Low rankness does not by itself imply coordinate level sparsity, which is the more direct driver of weak additivity. Nonetheless, our experiments indicate that the two are tightly coupled in practice: varying weight decay changes the rank of the weight matrices, and we observe a corresponding change in coordinate level sparsity of the pre-activations.}}

\red{
We empirically verified whether widening the model reduces, for each layer, the relative number of dimensions with a large second moment of the pre-activations. \cref{fig:input_dev_hist} shows the histogram of the square root of the pre-activation second moment at the second hidden layer, demonstrating that most dimensions have negligible second moment. This indicates that only a limited number of dimensions effectively contribute to the model's output. \cref{fig:input_dev} further shows that as the width increases, the overlap between the large-moment dimensions of two independently trained models decreases, supporting the approximate linearity of ReLU.
}

\subsection{Empirical Verification of Reciprocal Orthogonality} \label{subsec:check_RO}
\begin{figure}[t]
    \centering
    \includegraphics[width=\linewidth]{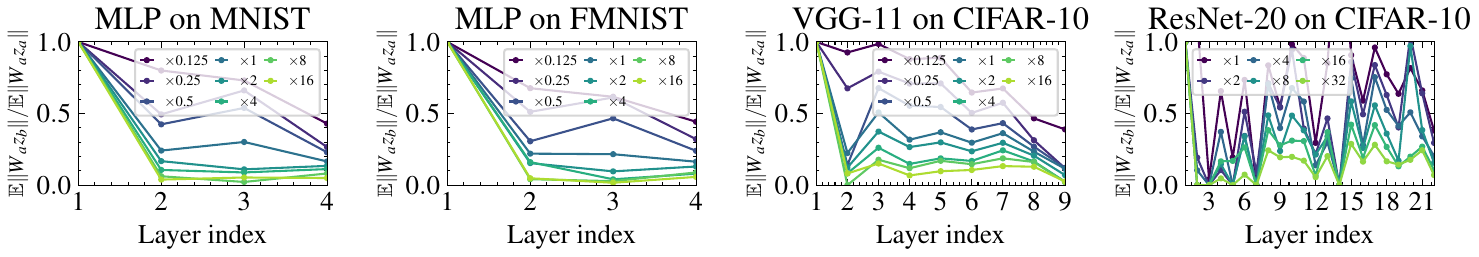}
    \caption{Ratio of mean norms $\E \|\bmW_\ell^\pa \bmz_\ell^\pb\| / \E \|\bmW_\ell^\pa \bmz_\ell^\pa\|$ at each layer. The decreasing ratio with width suggests approximate reciprocal orthogonality.}
    \label{fig:cross_weight_input_norm_ratio}
\end{figure}

\begin{figure}[t]
    \centering
    \includegraphics[width=\linewidth]{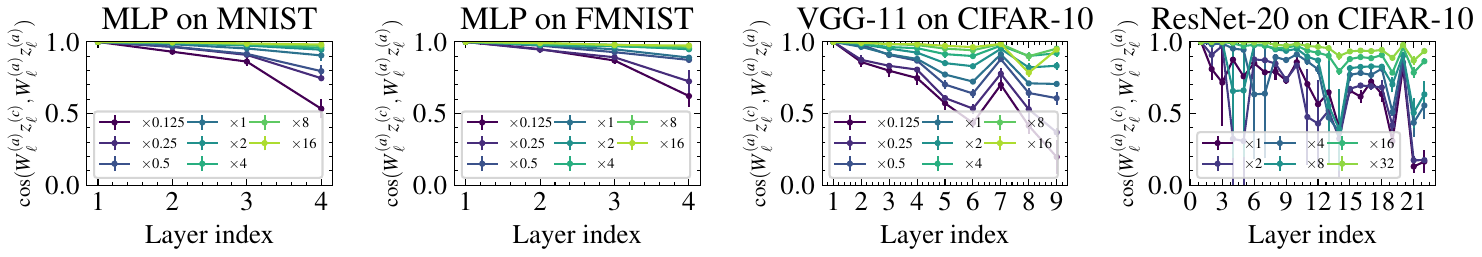}
    \caption{Average cosine similarity between $\bmW_\ell^\pa \bmz_{\ell-1}^\pc$ and $\bmW_\ell^\pa \bmz_{\ell-1}^\pa$ for each layer. The similarity increases with model width and approaches one, indicating that reciprocal orthogonality holds.}
    \label{fig:merged_input_cosine_similarity}
\end{figure}

We next empirically examine whether reciprocal orthogonality holds. \cref{fig:cross_weight_input_norm_ratio} reports the ratio $\E\|\bmW^\pa_\ell \bmz_{\ell-1}^\pb\|/\E\|\bmW^\pa_\ell \bmz_{\ell-1}^\pa\|$. As model width increases, this ratio decreases across all layers except the input, indicating approximate reciprocal orthogonality. To further confirm this, note that, if reciprocal orthogonality and LEWC hold, then applying the weight matrix of one model (e.g., $\bmW_\ell^\pa$) to the merged model's intermediate activation $\bmz_\ell^\pc$ should reproduce the corresponding pre-activation of that model (i.e., $\tilde{\bmz}_\ell^\pa$). Indeed, $\bmW_\ell^\pa \bmz_{\ell-1}^\pc = \lambda^\ell \bmW_\ell^\pa \bmz_{\ell-1}^\pa + (1-\lambda)^\ell \bmW_\ell^\pa \bmz_{\ell-1}^\pb  = \lambda^\ell \bmW_\ell^\pa \bmz_{\ell-1}^\pa$. To verify this empirically, we compute the average cosine similarity between $\bmW_\ell^\pa \bmz_{\ell-1}^\pc$ and $\bmW_\ell^\pa \bmz_{\ell-1}^\pa$ across test data. The results are shown in \cref{fig:merged_input_cosine_similarity}. As expected, the cosine similarity increases with model width and approaches one, confirming that the reciprocal orthogonality holds in practice.

From the above, we confirm that reciprocal orthogonality holds in wide models. This is also attributable to the low-rank structure of the weights. For example, if a trained model's weight matrices have high rank, then since one model's weight $\bmW_\ell^\pa$ at some layer $\ell$ has high rank, $\|\bmW_\ell^\pa \bmz_{\ell-1}\|/\|\bmz_{\ell-1}\|$ becomes large regardless of the direction of the input $\bmz_{\ell-1}$. This implies that reciprocal orthogonality does not hold. \citet{Ito_ICLR_2025} clarify that the relative rank of weights with respect to width becomes smaller for wider models, explaining why reciprocal orthogonality is more likely to hold when the model is wider.

\subsection{Effect of Weight Decay on LMC} \label{subsec:weight_decay}
\begin{figure}[t]
    \centering
    \begin{subfigure}[b]{0.49\textwidth}
        \centering
        \includegraphics[width=\textwidth]{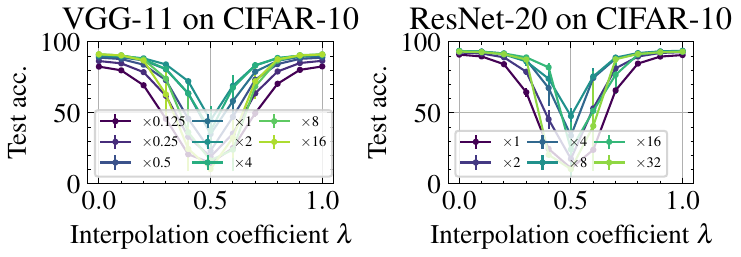}
        \caption{Test accuracies of merged models.}
    \end{subfigure}
    \hfill
    \begin{subfigure}[b]{0.49\textwidth}
        \centering
        \includegraphics[width=\textwidth]{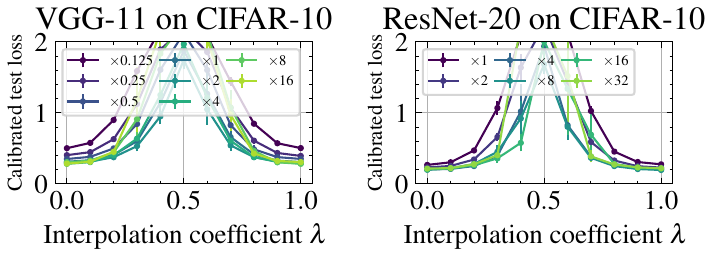}
        \caption{Calibrated test losses of merged models.}
    \end{subfigure}
    \hfill
    \caption{Performance of merged models for VGG-11 and ResNet-20 trained with weak weight decay ($10^{-4}$). Note that most of our results in this paper used a weight decay coefficient of $0.003$.}
    \label{fig:acc_low_wd}
\end{figure}

We argued that the improved performance of wide merged models arises because both weak additivity for ReLU activations and reciprocal orthogonality hold, leading to LEWC. As mentioned, these properties are easier to satisfy when weight matrices are low-rank.

Prior works~\citep{Timor_CoLT_2023,Tomer_arxiv_2023,Ito_ICLR_2025} empirically observed that widening models and applying stronger weight decay encourage low-rank weight matrices. Conversely, when weight decay is weak, weight matrices tend to have higher rank, making LEWC less likely. \cref{fig:acc_low_wd} shows that merged models trained with weak weight decay indeed exhibit large barriers. \red{To understand this phenomenon in more detail, we examine whether LEWC and its sufficient conditions still hold in this setting. The empirical results are summarized in \cref{fig:results_wd}. The impact of weaker weight decay is especially pronounced for VGG-11 and ResNet-20. First, \cref{fig:LEWC_wd} indicates that LEWC is no longer satisfied in deeper layers when $\lambda = 1/2$. Moreover, \cref{fig:relu_wd,fig:norm_ratio_wd} show that neither weak additivity of ReLU activations nor reciprocal orthogonality holds. Overall, these results show that weakening weight decay breaks LEWC and its two sufficient conditions, highlighting the role of low-rank structure in enabling LMC.

\begin{figure}
\begin{subfigure}[t]{\textwidth}
\centering
\includegraphics[width=\linewidth]{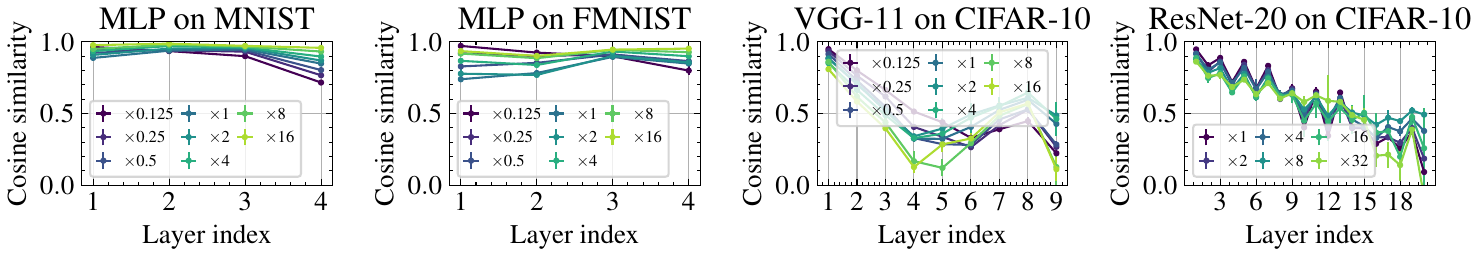}
\caption{Average cosine similarity between $f_\ell(\bmx; (\parama + \paramb)/2)$ and $(f_\ell(\bmx; \parama) + f_\ell(\bmx; \paramb))/2$ for each layer when test data are input to the models.}
\label{fig:LEWC_wd}
\end{subfigure}
\begin{subfigure}[t]{\textwidth}
\centering
\includegraphics[width=\linewidth]{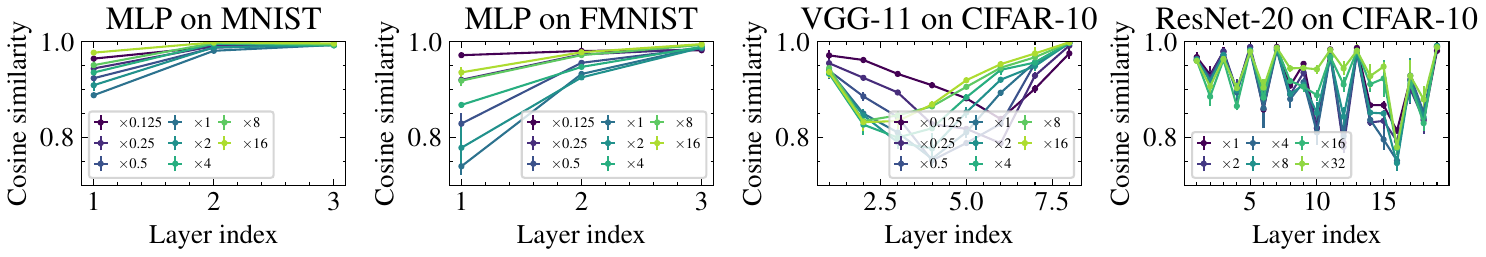}
\caption{Average cosine similarity between $\sigma((\tilde{\bmz}_\ell^\pa + \tilde{\bmz}_\ell^\pb)/2)$ and $(\sigma(\tilde{\bmz}_\ell^\pa) + \sigma(\tilde{\bmz}_\ell^\pb))/2$.}
\label{fig:relu_wd}
\end{subfigure}
\begin{subfigure}[t]{\textwidth}
\centering
\includegraphics[width=\linewidth]{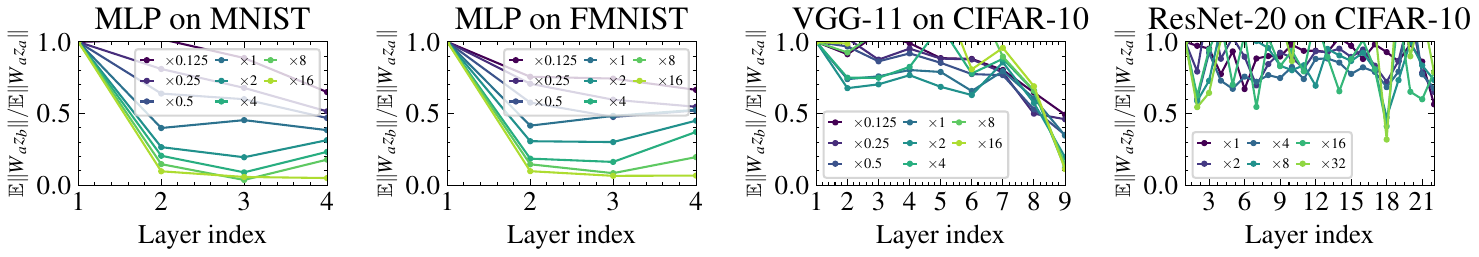}
\caption{Ratio of mean norms $\E \|\bmW_\ell^\pa \bmz_\ell^\pb\| / \E \|\bmW_\ell^\pa \bmz_\ell^\pa\|$ for each layer.}
\label{fig:norm_ratio_wd}
\end{subfigure}
\caption{Experimental results for trained models with weaker weight decay of $10^{-4}$. From top to bottom, the figures show evaluations of LEWC, weak additivity, and reciprocal orthogonality.}
\label{fig:results_wd}
\end{figure}

}

%\section{Conclusion and Discussion}
\section{Conclusion}

We empirically showed that simply widening neural networks improves the performance of merged models, eventually matching that of independently trained models. This behavior differs fundamentally from the intuition underlying permutation-based approaches to LMC. We introduced \emph{layerwise exponentially weighted connectivity} (LEWC), which arises when weak additivity for ReLU activations and \emph{reciprocal orthogonality} are satisfied, and we found that these properties hold in sufficiently wide models with low-rank weight matrices. Considering LMC in this distinct setting may provide new insights into both LMC itself and, more broadly, the dynamics of neural network training.

Our experiments focused on standard image classification with relatively simple architectures, since LEWC typically requires larger width multipliers than permutation-based merging. An important direction for future work is to test whether these phenomena extend to larger-scale models and other modalities.

\red{Our work is analytical rather than a proposal of a merging or federated learning method. By identifying weak additivity and reciprocal orthogonality as the drivers of permutation-free LMC, we point to practical directions for developing new model-merging and federated learning methods, such as training or permutation-search procedures that promote these properties, and simple rescaling to offset any LEWC-induced logit norm decay.}

%
%Finally, LEWC suggests new possibilities for model merging. One potential direction is to design permutation search methods that explicitly promote weak additivity and reciprocal orthogonality, thereby improving merged performance. Another is to correct for norm decay induced by LEWC by rescaling weight matrices, which could further enhance accuracy. Exploring such merging strategies remains an important avenue for future research.

%\subsubsection*{Acknowledgments}
%Use unnumbered third level headings for the acknowledgments. All
%acknowledgments, including those to funding agencies, go at the end of the paper.

\subsubsection*{Ethics statement}
This paper presents work whose goal is to advance the field of 
machine learning. There are many potential societal consequences 
of our work, none of which we feel must be specifically highlighted here.

\subsubsection*{Reproducibility statement}
The experimental settings for reproducing the results are described in \cref{app:settings}. All proofs of the theorems are given in \cref{app:proofs}.

\bibliography{modified}
\bibliographystyle{iclr2026_conference}

\clearpage
\appendix
\section{Additional Related Work}

\paragraph{(Linear) Mode Connectivity.} A series of works~\citep{Garipov_NIPS_2018,Draxler_ICML_2018,Freeman_ICLR_2017} have shown that independently trained neural networks can often be connected by low-loss nonlinear paths. \citet{Nagarajan_NIPS_2019} provided an early observation that, on MNIST, solutions obtained from stochastic gradient descent (SGD) with the same initialization can be connected by linear paths with little increase in loss. Later, \citet{frankle_ICML_2020} systematically demonstrated that such linear connections are not universal: whether they appear depends strongly on the dataset and architecture. They also observed that when two models are branched from a shared partially trained model, the resulting solutions are almost always connected by a linear path. Moreover, they investigated the link between linear mode connectivity (LMC) and the lottery ticket hypothesis~\citep{Frankle_ICLR_2019}.

More recent work has explored symmetry and alignment as explanations for LMC. \citet{Entezari_arxiv_2022} conjectured that, once permutation symmetries of hidden units are taken into account, LMC should hold with high probability. Building on this, \citet{Ainsworth_ICLR_2023} introduced a weight-matching approach based on bipartite graph alignment, while \citet{Pena_CVPR_2023} applied Sinkhorn's algorithm as a relaxation to solve the alignment problem more directly. Although a number of papers~\citep{Luca_JMLR_2019,Nguyen_ICLR_2019,Nguyen_ICML_2019,Kuditipudi_NIPS_2019} have studied nonlinear mode connectivity, theoretical understanding of LMC remains limited. \citet{Ferbach_AISTATS_2023} established width-dependent conditions under which LMC can be guaranteed, assuming independence of weight vectors. \citet{Zhou_NIPS_2023} proposed the notion of layerwise linear feature connectivity (LLFC) and proved that LLFC implies LMC. More recently, \citet{Ito_ICLR_2025} highlighted the role of dominant singular vectors in parameter space, showing that they are critical to satisfying LMC, especially when alignment via weight matching is applied.

In addition, several theoretical works have analyzed the effect of network width on nonlinear connectivity. For instance, \citet{Nguyen_ICML_2019} proved that in over-parameterized neural networks with piecewise-linear activations, every sublevel set of the loss is connected and unbounded. \citet{Shevchenko_ICML_2020} further showed that in wide multilayer perceptrons, SGD solutions can be linked through piecewise-linear paths along which the loss barrier vanishes as width increases. However, to the best of our knowledge, there are no theoretical results directly establishing that larger width makes LMC more likely. This motivates our empirical investigation of that question.

\paragraph{Model Merging.} Model merging has been studied in close connection with LMC, as well as in the context of federated learning. The idea of federated learning was introduced by \citet{McMahan_AISTATS_2017} and \citet{Konecny_arxiv_2016}, where models are trained locally on partitioned datasets. \citet{Wang_ICLR_2020} explored permuting local model components prior to aggregation, while \citet{Singh_NIPS_2020} proposed an approach to merge models by optimal transport, conceptually related to the weight-matching method of \citet{Ainsworth_ICLR_2023}. Although the method of \citet{Signh_HiLD_2024} was primarily designed for fusion and empirically underperforms weight matching, it can still be interpreted as an LMC-based technique because it enforces alignment within the same architecture.

While our work focuses on merging models trained from different random seeds, there is a parallel line of research on merging models obtained by fine-tuning from the same fundamental model. A representative example is Model Soups~\citep{Wortsman_ICML_2022}, which showed that averaging weights across fine-tuned models trained under different hyperparameters can improve test accuracy without additional inference cost, contrasting with standard ensembling. \citet{Matena_NeurIPS_2022} extended this idea by weighting models according to the Fisher information matrix, leading to more effective combinations. \citet{Yadav_NeurIPS_2023} introduced TIES-Merging, which prunes parameters with small updates, resolves conflicting signs, and averages only aligned components. These approaches are primarily designed for merging fine-tuned models of the same pretrained model, not for models independently trained from scratch with different initializations. By contrast, our study addresses this more challenging setting and seeks to shed light on the feasibility of model merging under such conditions.

\red{
\paragraph{Relation between Model Width and Loss Landscape.}
A broad trend in prior work is that increasing width tends to simplify the loss landscape of deep neural networks and makes the set of optimal solutions more connected. \cite{Nguyen_arxiv_2021,Simsek_ICML_2021} pointed out that the topology of the optimal solution set can change drastically by adding only a small number of neurons. Specifically, \cite{Nguyen_arxiv_2021} proved that for deep networks, having a single wide layer of width $n+1$, where $n$ is the number of training samples, is sufficient to ensure that all sublevel sets of the training loss are connected. \cite{Simsek_ICML_2021} further showed that adding one extra neuron per layer is enough to turn discrete minima into a single connected manifold. \cite{Liang_NeurIPS_2018} showed that adding one special neuron to each layer can eliminate all bad local minima, so that every local minimum becomes a global minimum. Subsequent works have also reported phase-transition-like behavior as width increases. For example, \cite{Li_SIAM_2022} showed that there exists a critical width $m^\ast$ such that when $m \ge m^\ast$, all suboptimal basins disappear and only optimal basins remain. More recently, \cite{Kim_ICLR_2025} established that in regularized two-layer networks, once the width exceeds a certain threshold, the optimal solution set becomes connected in parameter space. These results collectively suggest that widening changes the structure of the optimal set and can eventually yield mode connectivity, especially under regularization. Building on this line of work, we empirically show that, after calibrating the loss via inverse temperature scaling, the stronger property of linear mode connectivity holds in sufficiently wide networks without permutations. We believe our findings point to new directions for understanding SGD learning dynamics and may also inform future techniques for model merging and federated learning.
}

\section{Experimental Settings} \label{app:settings}

This section details the setup used to train neural networks and obtain solutions from stochastic gradient descent (SGD). The experiments were conducted on four benchmark datasets: MNIST~\citep{MNIST}, Fashion-MNIST (FMNIST)~\citep{FMNIST}, CIFAR-10~\citep{CIFAR10}, and CIFAR-100. For weight matching (WM), we adopted the implementation available in the public repository of \citet{Ainsworth_ICLR_2023}\footnote{\url{https://github.com/samuela/git-re-basin}}.  

All training and evaluation procedures were performed on a Linux workstation equipped with two AMD EPYC 7543 32-core processors, eight NVIDIA A30 GPUs, and 512 GB of memory. We used PyTorch~2.5.1\footnote{\url{https://pytorch.org/}}, PyTorch Lightning~2.4.0\footnote{\url{https://lightning.ai/docs/pytorch/stable/}}, and torchvision~0.20.1\footnote{\url{https://pytorch.org/vision/stable/index.html}} as the software framework.

\subsection{Model Training}

\paragraph{MLP on MNIST and FMNIST.}  
In line with the setup of \citet{Ainsworth_ICLR_2023}, we trained a fully connected multilayer perceptron (MLP) with three hidden layers, each containing 512 units. ReLU activations were applied to the hidden layers. For both MNIST and FMNIST, training used the Adam optimizer with a fixed learning rate of $1 \times 10^{-3}$ and a weight decay of $3 \times 10^{-3}$. The batch size was 512, and models were trained for up to 100 epochs. No learning rate scheduling was applied.

\paragraph{VGG-11 and ResNet-20 on CIFAR-10 and CIFAR-100.}  
For these experiments, we employed the source code released by \citet{Ito_ICML_2025}\footnote{\url{https://github.com/e5-a/STE-MM}}. The VGG-11 and ResNet-20 architectures followed the implementations described by \citet{Ainsworth_ICLR_2023}. During model merging, BatchNorm statistics were re-calibrated using the training data, following the procedure of \citet{Jordan_ICLR_2023}. Models were optimized with SGD using a learning rate of $0.4$ and a weight decay of $3 \times 10^{-3}$. The batch size was set to 500, and training was carried out for a maximum of 100 epochs. Data augmentation included random $32 \times 32$ crops and random horizontal flips.

%\paragraph{ResNet50 on ImageNet.} ResNet50 models were trained using a training script published on GitHub\footnote{\url{https://github.com/libffcv/ffcv-imagenet/tree/main}} by the FFCV library~\citep{FFCV}. The "rn50\_40\_epochs.yaml" file in the repository was used for the training setup. In the file, we changed \texttt{use\_blurpool} to ``0''. As described in \citep{Jordan_ICLR_2023}, we repaired the BatchNorm layers in these models during model merging by using the training dataset. Since ImageNet is a large dataset, we used 50,000 randomly selected images from the training set to repair the batch normalization layers.

\section{Layerwise Linear Feature Connectivity} \label{app:LLFC}

\begin{figure}[t]
    \centering
    \begin{subfigure}[b]{\textwidth}
        \includegraphics[width=\textwidth]{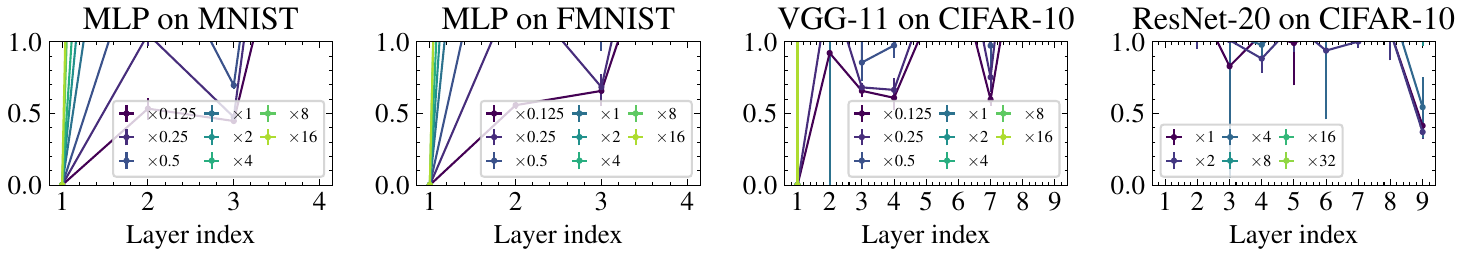}
        \caption{Results without permutation.}
        \label{fig:comm_wo_perm}
    \end{subfigure}
    \begin{subfigure}[b]{\textwidth}
        \includegraphics[width=\textwidth]{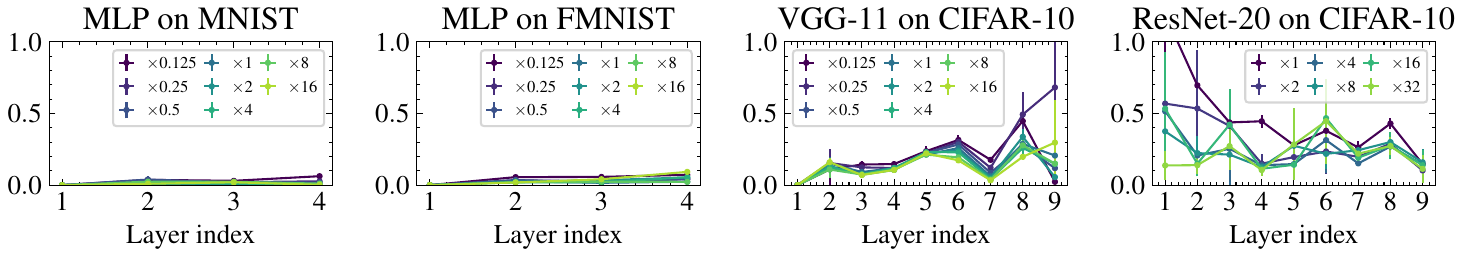}
        \caption{Results with permutations found by WM.}
        \label{fig:comm_w_perm}
    \end{subfigure}
    \caption{Verification of whether \cref{def:commutativity} holds at each layer. For all datasets, we compute $\mathrm{dist}(\mathrm{vec}(\bmW_\ell^\pa \bmz_{\ell-1}^\pa + \bmW_\ell^\pb \bmz_{\ell-1}^\pb), \mathrm{vec}(\bmW_\ell^\pa \bmz_{\ell-1}^\pb + \bmW_\ell^\pb \bmz_{\ell-1}^\pa))$ at each layer. A smaller value indicates a higher degree of commutativity. For reference, we also report the results using permutations found by WM in \cref{fig:comm_w_perm}. Without permutations, the values are very large for all layers except the input layer, implying that the commutativity property almost never holds.}
\end{figure}
\citet{Zhou_NIPS_2023} pointed out that the existence of LMC in settings such as permutation and spawning can be attributed to a property called layerwise linear feature connectivity (LLFC). Additionally, \citet{Zhou_NIPS_2023} also showed that two conditions are sufficient for LLFC to hold: weak additivity for ReLU activations~(\cref{def:add_ReLU}) and the commutativity property defined as follows:
\begin{definition}[Commutativity] \label{def:commutativity}
Let $\bmW_\ell^\pa$ and $\bmW_\ell^\pb$ be the weights, and $\bmz_\ell^\pa$ and $\bmz_\ell^\pb$ the outputs of the $\ell$-th layer. The models $\parama$ and $\paramb$ satisfy \emph{commutativity} if, for every layer $\ell \in [L]$,
\begin{align} \label{eq:comm}
\bmW_\ell^\pa \bmz_{\ell-1}^\pa + \bmW_\ell^\pb \bmz_{\ell-1}^\pb = \bmW_\ell^\pa \bmz_{\ell-1}^\pb + \bmW_\ell^\pb \bmz_{\ell-1}^\pa \quad \text{almost surely.}
\end{align}
\end{definition}
Since this condition can be rewritten as $(\bmW_\ell^\pa - \bmW_\ell^\pb)(\bmz_{\ell-1}^\pa - \bmz_{\ell-1}^\pb) = 0$, it tends to hold when the weights of the two models are close (i.e., $\|\bmW_\ell^\pa - \bmW_\ell^\pb\| \approx 0$). This is precisely the objective of WM methods, which explains why WM encourages the emergence of LLFC.

We confirmed in \cref{sec:analysis} that LEWC holds when $\lambda = 1/2$. However, we cannot rule out the possibility that LLFC also holds. It is possible that the cosine similarity in \cref{fig:cos_sim_LLFC} approaching 1 is due to the validity of LLFC. To exclude the possibility, we confirm that the commutativity property, which is one of the sufficient conditions for LLFC, does not hold in our settings. \cref{fig:comm_wo_perm} presents experimental results on whether the commutativity property holds across layers for all models. For comparison, we also show the results when applying the permutations discovered by WM in \cref{fig:comm_w_perm}. Following prior work, we evaluate the difference between the left- and right-hand sides of \cref{eq:comm} using $\mathrm{dist}(x, y) = \|x-y\|^2/(\|x\|\|y\|)$. For ResNet-20, we compute this only for the first convolutional layer of each block. As the results show, without permutations, the commutativity property hardly holds, indicating that LLFC is unlikely to be satisfied in our settings.

\red{
\subsection{Difference between LLFC and LEWC} 
We empirically confirmed that, in LMC without permutations, LLFC does not hold whereas LEWC does. In this subsection, we provide a qualitative explanation for why LLFC fails even though LEWC holds.

A key difference between LLFC and LEWC is that LLFC is a property that arises when the weights of two models are sufficiently close, whereas LEWC is a property that arises when the weights of two models are different, namely approximately orthogonal. This can be understood by focusing on the difference between commutativity and reciprocal orthogonality, which are the respective sufficient conditions that distinguish LLFC from LEWC.

Below we explain this point more concretely using equations. Let be $\bm{\theta}_a$ and $\bm{\theta}_b$ the two models, and for simplicity assume that biases can be ignored. Commutativity for $\ell \geq 1$ requires
$$
\bm{W}_\ell^{(a)} \bm{z}_{\ell-1}^{(a)} + \bm{W}_\ell^{(b)} \bm{z}_{\ell-1}^{(b)} = \bm{W}_\ell^{(a)} \bm{z}_{\ell-1}^{(b)} + \bm{W}_\ell^{(b)} \bm{z}_{\ell-1}^{(a)}
\Leftrightarrow
(\bm{W}_{\ell}^{(a)} - \bm{W}_{\ell}^{(b)} ) (\bm{z}_{\ell-1}^{(a)} - \bm{z}_{\ell-1}^{(b)}) = 0.
$$
This condition clearly holds when all weights match, that is, when $\|\bm{W}_\ell^{(a)} - \bm{W}_\ell^{(b)}\| = 0$. \cite{Zhou_NIPS_2023} stated that this explains why WM can find permutations that make LMC hold, since the objective of WM is to search for permutations that minimize the distance between the weights of two models. In spawning, it is also known that LMC holds between solutions obtained by branching from a partially trained initialization and then continuing SGD independently. This is because the first training steps before branching keep the two models close, which promotes commutativity. \cite{Zhou_NIPS_2023} experimentally verified that commutativity holds in the spawning setting as well. In contrast, reciprocal orthogonality clearly does not hold when the two weight matrices match. Specifically, it requires $\bm{W}_\ell^{(a)} \bm{z}_\ell^{(b)} = 0$ and $\bm{W}_\ell^{(b)} \bm{z}_\ell^{(a)} = 0$, but if the weight matrices are equal and biases are negligible, then the activations are also equal, and these conditions are not satisfied. This shows that commutativity requires the two models to be sufficiently close, whereas reciprocal orthogonality requires them to be sufficiently different, namely approximately orthogonal, so the two requirements are incompatible. For example, in \cref{fig:comm_wo_perm,fig:cross_weight_input_norm_ratio} we showed that without permutations, commutativity fails while reciprocal orthogonality holds, which reflects that the two models remain sufficiently distinct. Conversely, when models are aligned using WM discovered permutations, or in the spawning setting, the weight matrices are expected to be close. In such cases, commutativity is likely to hold, whereas reciprocal orthogonality is unlikely to be satisfied.

This incompatibility implies that the reason for LMC identified in our paper is fundamentally different from the LLFC based explanation of LMC in \cite{Zhou_NIPS_2023}. Intuitively, under commutativity, permutations make the two models sufficiently close, so the merged model also remains close to each of the original models, and its output stays similar to theirs. In contrast, in our explanation, the two models have weight matrices that are highly different, namely approximately orthogonal, which allows the merged model to embed the two functions independently.

This distinction also explains why inverse temperature calibration is necessary under LEWC. LLFC requires the two weight matrices to be close, so the norm of the merged weight matrix is similar to that of the original models. In contrast, LEWC requires the two weight matrices to be orthogonal, so the norm of the merged weight matrix is strictly smaller than that of each original model. Since this reduction occurs at every layer, the output norm decays exponentially with depth. This provides the reason for the exponential decay of activation norms under LEWC.
}

\red{
\section{Relation to Loss Landscape Flatness}

\begin{figure}
    \centering
    \begin{subfigure}[b]{\textwidth}
        \centering
        \includegraphics[width=\textwidth]{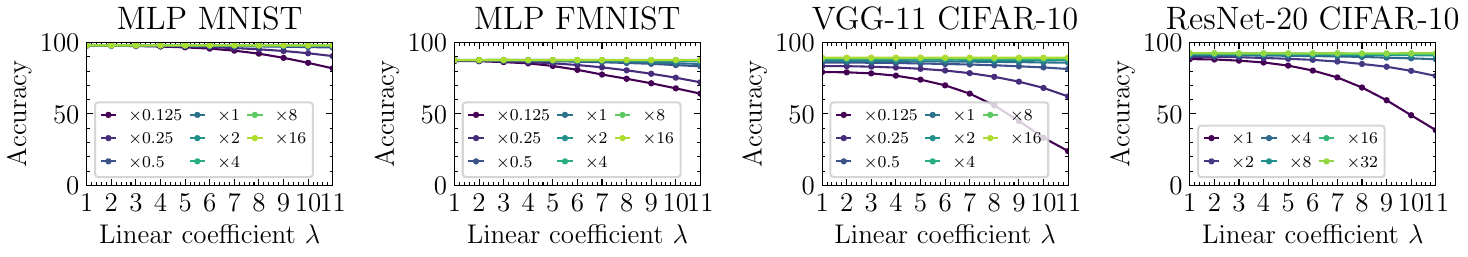}
        \caption{Test accuracy of the perturbed model $\parama + \lambda d \bm{u}/\|\bm{u}\|$ obtained by moving $\parama$ along a random direction.}
    \end{subfigure}
    \hfill
    \begin{subfigure}[b]{\textwidth}
        \centering
        \includegraphics[width=\textwidth]{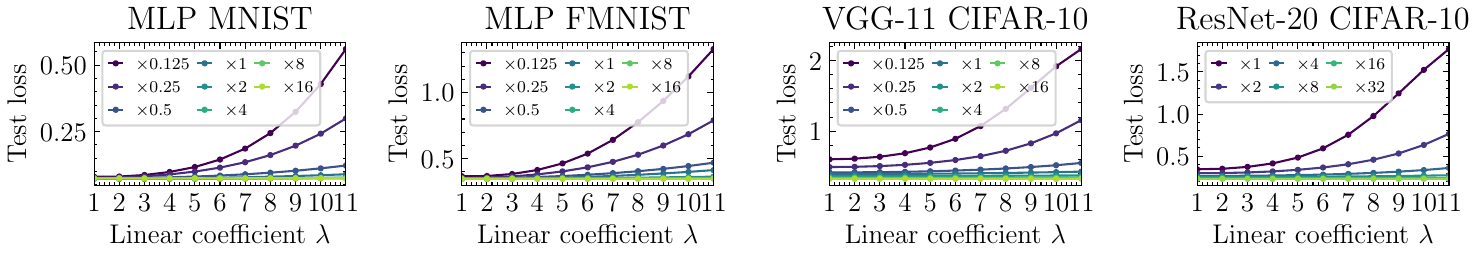}
        \caption{Uncalibrated test loss of the perturbed model $\parama + \lambda d \bm{u}/\|\bm{u}\|$ obtained by moving $\parama$ along a random direction.}
    \end{subfigure}
    \hfill
    \caption{Test accuracy and test loss of models perturbed along a random direction.}
    \label{fig:acc_loss_random_direction}
\end{figure}

We showed that LMC becomes easier to satisfy as the width increases. One possible alternative explanation is that widening simply improves the flatness of the loss or accuracy landscape, and that LMC emerges as a consequence of this increased flatness. In this section, we examine this possibility and show that a mere improvement in flatness is insufficient to explain our observations.

\subsection{Empirical Evaluation of Flatness} 
As a simple proxy for flatness with respect to width, we evaluate how test accuracy and test loss change when we perturb a trained model along a random direction. Let $\boldsymbol{\theta}_a$ and $\boldsymbol{\theta}_b$ be two trained models. We first compute their $L^2$ distance $d = \|\boldsymbol{\theta}_a - \boldsymbol{\theta}_b\|$. Although $d$ could in principle grow with width, in our setting weight decay prevents it from increasing substantially. We then generate a Gaussian random vector $\boldsymbol{u}$ and normalize it to unit length, and use a perturbation of the same norm as the inter model distance, namely $d\boldsymbol{u}/\|\boldsymbol{u}\|$. This construction is intended to compare random perturbations at the same scale as the distance between two independently trained models. For each $\lambda \in [0,1]$, we measure the accuracy and loss of the perturbed model $\boldsymbol{\theta}_a + \lambda d \boldsymbol{u}/\|\boldsymbol{u}\|$. We expect performance to degrade as $\lambda$ approaches 1, and we interpret the rate of degradation as a proxy for flatness. The results are shown in \cref{fig:acc_loss_random_direction}. As the width increases, both test accuracy and test loss become almost insensitive to the perturbation. In particular, even at $\lambda = 1$, the degradation becomes negligible after a modest increase in width. This indicates that this flatness proxy improves rapidly with width.

\begin{figure}[t]
    \centering
    \includegraphics[width=\linewidth]{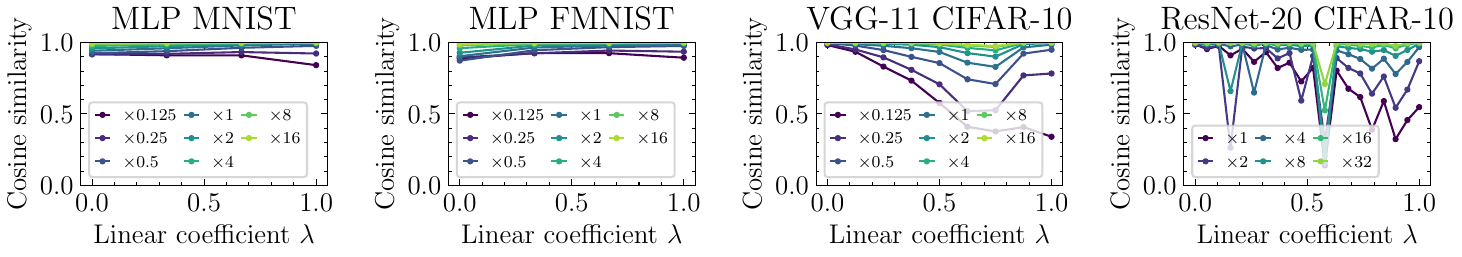}
    \caption{Cosine similarity of hidden layer activations between $\bm{\theta}_a$ and the perturbed model $\bm{\theta}_a + d \bm{u} / \|\bm{u}\|$.}
    \label{fig:cos_sim_random}
\end{figure}

\subsection{Why Does the Flatness Improve with Width} 
Intuitively, one might expect that moving model parameters in a random direction would quickly harm accuracy and loss. However, \cref{fig:acc_loss_random_direction} shows that widening significantly suppresses this degradation, even more so than in the interpolation between two independently trained models shown in \cref{fig:test_acc_merged_model,fig:test_loss_merged_model}. We attribute this to the nature of the random perturbation. When the width is large, a perturbation with fixed overall norm is spread over a much larger number of parameters, so the per parameter magnitude becomes relatively small. As a result, the perturbed model behaves similarly to the original one.

More formally, let $\bm{W}_\ell$ denote the weight matrix of the original model at layer $\ell$, and let $\bm{W}'_\ell$ denote the random perturbation added to that layer. The perturbed weights are $\bm{W}_\ell + \bm{W}'_\ell$. Ignoring biases, the pre-activation becomes
$(\bm{W}_\ell + \bm{W}'_\ell)\bm{z}_{\ell-1} = \bm{W}_\ell \bm{z}_{\ell-1} + \bm{W}'_\ell \bm{z}_{\ell-1}$.
Thus, the change in output induced by the perturbation is represented by $\bm{W}'_\ell \bm{z}_{\ell-1}$. The $i$ th component of this term is $\sum_j \bm{W}'_{\ell,i,j}\bm{z}_{\ell-1,j}$. Since $\bm{W}'_\ell$ is independent of $\bm{z}_{\ell-1}$ and its total norm is fixed to $d$, the typical magnitude of each entry $\bm{W}'_{\ell,i,j}$ decreases as the width increases, because the same norm is distributed across more parameters. Moreover, due to the effect of weight decay, the inter model distance $d$ does not necessarily grow with width. Therefore, $\sum_j \bm{W}'_{\ell,i,j}\bm{z}_{\ell-1,j}$ is expected to approach zero as the width increases.

To validate this intuition, we measure the cosine similarity of hidden layer activations between the original model $\boldsymbol{\theta}_a$ and the perturbed model $\boldsymbol{\theta}_a + d \boldsymbol{u}/\|\boldsymbol{u}\|$. The results are shown in \cref{fig:cos_sim_random}. The similarity increases with width, supporting the view that the apparent improvement in flatness under random perturbations is largely explained by the diminishing effect of Gaussian noise in wide networks.

\begin{figure}[t]
    \centering
    \includegraphics[width=\linewidth]{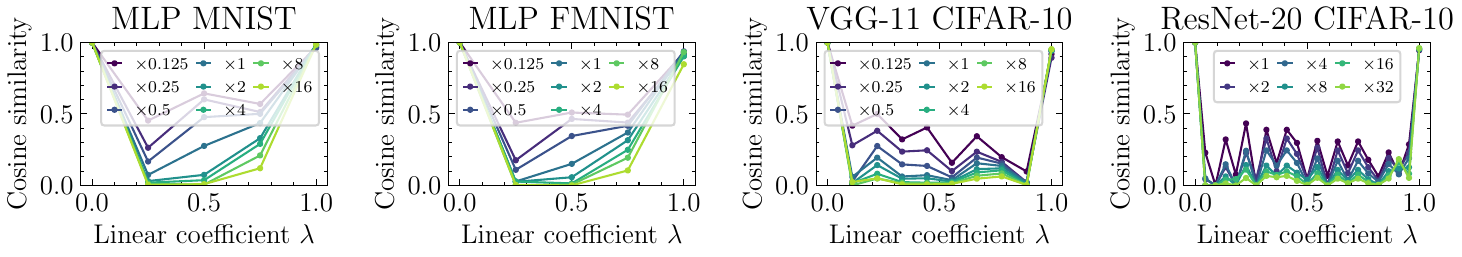}
    \caption{Cosine similarity of hidden layer activations between $\bm{\theta}_a$ and $\bm{\theta}_b$.}
    \label{fig:cos_sim_original}
\end{figure}

\subsection{Flatness alone Cannot Explain Our Findings} 
As a comparison to the random perturbation setting, we examine the cosine similarity of hidden layer activations between two independently trained models $\bm{\theta}_a$ and $\bm{\theta}_b$, shown in \cref{fig:cos_sim_original}. Even though the norm of the random perturbation in \cref{fig:cos_sim_random} equals the model distance between $\bm{\theta}_a$ and $\bm{\theta}_b$, the activation similarities exhibit qualitatively different behavior. This suggests that probing flatness via random perturbations does not capture the relationship between independently trained models produced by SGD.

Finally, note that if LMC were explained solely by increased flatness, then the exponential decay of activation norms predicted by LEWC would also need to follow from flatness, which it does not. Moreover, if flatness were the only reason, the merged model should achieve test loss comparable to the original models even without inverse temperature calibration. However, \cref{fig:test_loss_merged_model} contradicts this. These observations indicate that improved flatness alone is insufficient to account for the emergence of LMC with increasing width.
}

\section{Proof} \label{app:proofs}

\subsection{Main Theorem} \label{app:proof_main_theorem}

\begin{theorem}
    For two bias-free models $\parama$ and $\paramb$, if \cref{def:add_ReLU} and \cref{def:RO} hold, then LEWC is satisfied.
\end{theorem}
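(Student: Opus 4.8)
The plan is to prove the statement by induction on the layer index $\ell$, showing that the merged model $\paramc = \lambda\parama + (1-\lambda)\paramb$ satisfies $\bmz_\ell^\pc = \lambda^\ell \bmz_\ell^\pa + (1-\lambda)^\ell \bmz_\ell^\pb$ almost surely, where $\bmz_\ell^\pc = f_\ell(\bmx;\paramc)$ and, since the models are bias-free, $\bmW_\ell^\pc = \lambda\bmW_\ell^\pa + (1-\lambda)\bmW_\ell^\pb$. For the base case $\ell = 1$, both models share the same input $\bmz_0 = \bmx$, so the merged pre-activation is already a genuine convex combination, $\bmW_1^\pc\bmx = \lambda\tilde{\bmz}_1^\pa + (1-\lambda)\tilde{\bmz}_1^\pb$, and weak additivity (\cref{def:add_ReLU}) applies verbatim to give $\bmz_1^\pc = \lambda\bmz_1^\pa + (1-\lambda)\bmz_1^\pb$. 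Note that reciprocal orthogonality is needed only from the second layer onward, which matches the fact that \cref{def:RO} is stated for $\ell \in \{2,\dots,L\}$.

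For the inductive step, I would assume $\bmz_{\ell-1}^\pc = \lambda^{\ell-1}\bmz_{\ell-1}^\pa + (1-\lambda)^{\ell-1}\bmz_{\ell-1}^\pb$ and compute the merged pre-activation by substituting both this hypothesis and $\bmW_\ell^\pc = \lambda\bmW_\ell^\pa + (1-\lambda)\bmW_\ell^\pb$. Expanding the product yields four terms: two ``diagonal'' terms $\lambda^\ell\bmW_\ell^\pa\bmz_{\ell-1}^\pa$ and $(1-\lambda)^\ell\bmW_\ell^\pb\bmz_{\ell-1}^\pb$, together with two ``cross'' terms proportional to $\bmW_\ell^\pa\bmz_{\ell-1}^\pb$ and $\bmW_\ell^\pb\bmz_{\ell-1}^\pa$. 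Here reciprocal orthogonality (\cref{def:RO}) does exactly the work it is designed for: both cross terms vanish, leaving
\begin{align*}
\tilde{\bmz}_\ell^\pc = \lambda^\ell\tilde{\bmz}_\ell^\pa + (1-\lambda)^\ell\tilde{\bmz}_\ell^\pb.
\end{align*}

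The step I expect to be the main obstacle is applying the ReLU to this combination, since the coefficients $\lambda^\ell$ and $(1-\lambda)^\ell$ no longer sum to one, so \cref{def:add_ReLU} cannot be invoked directly. The fix I would use is the positive homogeneity of ReLU: writing $s = \lambda^\ell + (1-\lambda)^\ell$ (which is strictly positive for all $\lambda\in[0,1]$, with minimum $2^{1-\ell}$ at $\lambda=1/2$) and $\mu = \lambda^\ell/s \in [0,1]$, I factor out $s$ so that the argument becomes the convex combination $\mu\tilde{\bmz}_\ell^\pa + (1-\mu)\tilde{\bmz}_\ell^\pb$. Then $\sigma(\tilde{\bmz}_\ell^\pc) = s\,\sigma(\mu\tilde{\bmz}_\ell^\pa + (1-\mu)\tilde{\bmz}_\ell^\pb)$, and now weak additivity applies with coefficient $\mu$, yielding $s\bigl(\mu\sigma(\tilde{\bmz}_\ell^\pa) + (1-\mu)\sigma(\tilde{\bmz}_\ell^\pb)\bigr) = \lambda^\ell\bmz_\ell^\pa + (1-\lambda)^\ell\bmz_\ell^\pb$, which closes the induction. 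Running $\ell$ through $[L]$ then gives \cref{eq:LEWC}, i.e.\ LEWC. A minor point to state carefully is that every equality holds almost surely because \cref{def:add_ReLU} and \cref{def:RO} are themselves almost-sure statements, so a union bound over the finitely many layers preserves the almost-sure conclusion.
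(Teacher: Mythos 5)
Your proof is correct and follows essentially the same route as the paper's own: induction over layers, with reciprocal orthogonality annihilating the two cross terms and weak additivity distributing the ReLU over the remaining diagonal terms. The one place you go beyond the paper is the normalization step: the paper's proof passes directly from $\sigma\!\left(\lambda^\ell \bmW_\ell^\pa \bmz_{\ell-1}^\pa + (1-\lambda)^\ell \bmW_\ell^\pb \bmz_{\ell-1}^\pb\right)$ to $\lambda^\ell\sigma(\bmW_\ell^\pa \bmz_{\ell-1}^\pa)+(1-\lambda)^\ell\sigma(\bmW_\ell^\pb \bmz_{\ell-1}^\pb)$ without comment, even though \cref{def:add_ReLU} is stated only for convex combinations, and your use of the positive homogeneity of ReLU to factor out $s=\lambda^\ell+(1-\lambda)^\ell$ and apply weak additivity with the normalized coefficient $\mu=\lambda^\ell/s$ supplies exactly the justification that the paper leaves implicit.
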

\begin{proof}
We prove the claim by induction on the depth $\ell$. For the base case $\ell=1$, the statement holds trivially. Now assume $\ell \geq 2$ and that \cref{eq:LEWC} holds for all layers prior to $\ell$. The left-hand side of \cref{eq:LEWC} can be written as
\begin{align}
    f_\ell(\bmx; \lambda \parama + (1-\lambda) \paramb) 
    &= \sigma\!\left((\lambda \bmW_\ell^\pa + (1-\lambda)\bmW_\ell^\pb)\,f_{\ell-1}(\bmx; \lambda \parama + (1-\lambda) \paramb)\right) \\
    &= \sigma\!\left((\lambda \bmW_\ell^\pa + (1-\lambda)\bmW_\ell^\pb)\,\big(\lambda^{\ell-1} f_{\ell-1}(\bmx; \parama) + (1-\lambda)^{\ell-1} f_{\ell-1}(\bmx; \paramb)\big)\right) \\
    &= \lambda^\ell \sigma(\bmW_\ell^\pa f_{\ell-1}(\bmx; \parama)) + (1-\lambda)^\ell \sigma(\bmW_\ell^\pb f_{\ell-1}(\bmx; \paramb)) \\
    &= \lambda^\ell f_\ell(\bmx;\parama) + (1-\lambda)^\ell f_\ell(\bmx; \paramb).
\end{align}
Thus, the statement holds for layer $\ell$, completing the induction.
\end{proof}

%\subsection{Proof of \cref{th:gaussian_relu}} \label{app:proof_gaussian}
\subsection[Proof of Theorem~\ref{th:gaussian_relu}]{Proof of \cref{th:gaussian_relu}} \label{app:proof_gaussian}

We now provide the proof of the following theorem.
\begin{theorem}
    Let $\bmu, \bmv \sim \mathcal{N}(\bm{0}, \bmI_d)$ be two independent Gaussian random vectors in $\mathbb{R}^d$. Then, with probability at least $1-3\delta$, for every real number $\epsilon$ satisfying $\epsilon \geq K\max\!\left(\sqrt{\tfrac{1}{cd}\log(2/\delta)}, \tfrac{1}{cd}\log(2/\delta)\right)$, we have
    \begin{align}
        \frac{\frac{3}{4}+\frac{1}{\pi}-\epsilon}{\sqrt{\left(1+\epsilon\right)\left(1+\frac{1}{\pi}+\epsilon\right)}} 
        \leq \frac{(\sigma(\bmu) + \sigma(\bmv))^\top \sigma(\bmu + \bmv)}{\|\sigma(\bmu) + \sigma(\bmv)\|\|\sigma(\bmu + \bmv)\|} 
        \leq \frac{\frac{3}{4}+\frac{1}{\pi}+\epsilon}{\sqrt{\left(1-\epsilon\right)\left(1+\frac1\pi -\epsilon\right)}},
    \end{align}
    where $c$ is an absolute constant and $K=32/3$.
\end{theorem}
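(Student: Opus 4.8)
The plan is to recognize that the cosine similarity is, after the factors of $d$ cancel, a ratio of three empirical averages over the coordinates, each a sum of i.i.d.\ terms, and then to combine exact mean computations with a Bernstein-type concentration bound. Writing $w_i = u_i + v_i$, set
\begin{align}
A &= \frac{1}{d}\sum_{i=1}^d (\sigma(u_i)+\sigma(v_i))\,\sigma(w_i), \\
B &= \frac{1}{d}\sum_{i=1}^d (\sigma(u_i)+\sigma(v_i))^2, \\
C &= \frac{1}{d}\sum_{i=1}^d \sigma(w_i)^2,
\end{align}
so that the quantity of interest is exactly $A/\sqrt{BC}$. Since the pairs $(u_i,v_i)$ are i.i.d., each of $A,B,C$ is an average of i.i.d.\ terms, and the goal reduces to showing each concentrates tightly around its mean.

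First I would compute the three means exactly. The norm means are elementary: using $\E[\sigma(u)]=1/\sqrt{2\pi}$ and $\E[\sigma(u)^2]=1/2$ for $u\sim\mathcal N(0,1)$, together with $\E[\sigma(w)^2]=1$ for $w\sim\mathcal N(0,2)$, independence gives $\E B = 1 + 1/\pi$ and $\E C = 1$. For the numerator, by symmetry $\E A = 2\,\E[\sigma(u)\sigma(u+v)]$, and I would evaluate $\E[\sigma(u)\sigma(u+v)]$ using the closed form for the expected product of ReLUs of two zero-mean jointly Gaussian variables with correlation $\rho$, namely $\tfrac{\sigma_X\sigma_Y}{2\pi}\big(\sqrt{1-\rho^2}+\rho(\tfrac{\pi}{2}+\arcsin\rho)\big)$. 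Here $(u,u+v)$ has $\sigma_X=1$, $\sigma_Y=\sqrt2$, and $\rho=1/\sqrt2$, yielding $\E[\sigma(u)\sigma(u+v)]=\tfrac38+\tfrac{1}{2\pi}$ and hence $\E A = \tfrac34 + \tfrac1\pi$. These three values are precisely the constants appearing in the stated bounds.

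Next I would establish the concentration. Each summand is a polynomial of degree at most two in the Gaussian coordinates $(u_i,v_i)$ (for instance $\sigma(w_i)^2\le w_i^2$ and $(\sigma(u_i)+\sigma(v_i))^2\le 2(u_i^2+v_i^2)$), so each centered summand is sub-exponential with $\psi_1$-norm bounded by an absolute constant. Applying Bernstein's inequality for i.i.d.\ sub-exponential variables to each of $A,B,C$ gives, for each,
\begin{align}
\Pr\big(|A-\E A|\ge t\big)\le 2\exp\!\Big(-c\,d\,\min\big(t^2/K^2,\ t/K\big)\Big),
\end{align}
with $c$ the absolute Bernstein constant. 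Setting the right-hand side equal to $\delta$ and solving for $t$ shows that a deviation of at most $\epsilon$ is guaranteed with probability $\ge 1-\delta$ precisely when $\epsilon \ge K\max\!\big(\sqrt{\tfrac{1}{cd}\log(2/\delta)},\ \tfrac{1}{cd}\log(2/\delta)\big)$, the two-regime form reflecting the sub-Gaussian and sub-exponential tails, and $K=32/3$ being the common $\psi_1$-bound. A union bound over the three events yields probability at least $1-3\delta$ on which simultaneously $A\ge \tfrac34+\tfrac1\pi-\epsilon$, $B\le 1+\tfrac1\pi+\epsilon$, $C\le 1+\epsilon$ together with the reversed inequalities. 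Substituting these into $A/\sqrt{BC}$ — minimizing the numerator and maximizing the denominator for the lower bound, and vice versa for the upper bound — reproduces exactly the two-sided inequality in the statement.

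The main obstacle is the bookkeeping in the concentration step: verifying that the centered summands are genuinely sub-exponential and bounding their $\psi_1$-norms sharply enough to pin down the explicit constant $K=32/3$. The squared terms behave like shifted $\chi^2$ variables while the cross term is a product of correlated sub-Gaussians, so the three norms differ, and care is needed to take the largest as the common $K$ and to track the absolute constant $c$ through Bernstein's inequality so that the $\epsilon$-threshold emerges in exactly the claimed form. The mean computations, by contrast, are routine once the bivariate ReLU-product formula is invoked.
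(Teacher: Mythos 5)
Your proposal follows essentially the same route as the paper: decompose the cosine similarity into the three i.i.d.\ coordinate averages $A$, $B$, $C$, compute their means exactly via the bivariate Gaussian ReLU-product identity (your general-variance form applied to $(u,u+v)$ gives the same $\tfrac34+\tfrac1\pi$, $1+\tfrac1\pi$, $1$ as the paper's unit-variance version), then apply Bernstein's inequality for sub-exponential summands and a union bound over the three events. The only piece you leave as an acknowledged obstacle --- pinning down $K=32/3$ --- is handled in the paper exactly as you sketch, via $\expnorm{XY}\le\gaunorm{X}\gaunorm{Y}$ applied to the relevant products of sub-Gaussians, so the argument is correct and matches the paper's.
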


To this end, we prepare several auxiliary results. First, we recall the definitions of sub-Gaussian and sub-exponential random variables.

\begin{definition}[Sub-Gaussian random variable~\citep{HDP_book}]
A random variable $X$ is called \emph{sub-Gaussian} if there exists a constant $C>0$ such that, for all $t\geq 0$,
\begin{align}
\Pr(|X| \geq t) \;\leq\; 2\exp(-t^2/C^2).
\end{align}
The sub-Gaussian norm is defined as
\begin{align}
\gaunorm{X} := \inf\left\{c>0 : \E\exp(X^2/c^2)\leq 2\right\}.
\end{align}
\end{definition}

\begin{definition}[Sub-exponential random variable~\citep{HDP_book}]
A random variable $X$ is called \emph{sub-exponential} if there exists a constant $C>0$ such that, for all $t\geq 0$,
\begin{align}
\Pr(|X|\geq t)\;\leq\;2\exp(-t/C).
\end{align}
The sub-exponential norm is defined as
\begin{align}
\expnorm{X} := \inf\left\{t>0 : \E\exp(|X|/t)\leq 2\right\}.
\end{align}
\end{definition}

It is standard that if $X$ is sub-Gaussian, then $\expnorm{X^2} = \gaunorm{X}^2$. Moreover, for two sub-Gaussian random variables $X$ and $Y$, we have $\expnorm{XY} \leq \gaunorm{X}\,\gaunorm{Y}$.

We also need the following classical identity for correlated Gaussians.

\begin{theorem} \label{th:gau_rho}
Let $(x,y)$ be jointly Gaussian random variables with zero mean, unit variances, and correlation $\rho$. Then
\begin{align}
\mathbb{E} \big[\sigma(x)\sigma(y)\big]
= \frac{1}{4}\left( \rho + \frac{2}{\pi} \Big(\sqrt{1-\rho^2} + \rho \arcsin \rho \Big) \right).
\end{align}
\end{theorem}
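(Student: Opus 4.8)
The plan is to avoid evaluating the two-dimensional Gaussian integral directly and instead use Price's theorem, which reduces the problem to a one-dimensional integration in the correlation parameter $\rho$. Writing $I(\rho) := \mathbb{E}[\sigma(x)\sigma(y)]$ and observing that $\sigma(x)\sigma(y) = xy\,\mathbf{1}\{x>0\}\,\mathbf{1}\{y>0\}$, the first step is to differentiate $I$ with respect to $\rho$. Since the ReLU derivative is $\sigma'(t) = \mathbf{1}\{t>0\}$ almost everywhere, Price's theorem gives
\begin{align}
\frac{dI}{d\rho} = \mathbb{E}\big[\sigma'(x)\,\sigma'(y)\big] = \Pr(x>0,\, y>0).
\end{align}

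Next I would invoke the standard orthant probability for a bivariate standard Gaussian with correlation $\rho$, namely $\Pr(x>0, y>0) = \tfrac{1}{4} + \tfrac{1}{2\pi}\arcsin\rho$. Substituting and integrating in $\rho$ then determines $I$ up to a constant, which is fixed by the boundary condition at $\rho=0$: there $x$ and $y$ are independent, so $I(0) = (\mathbb{E}[\sigma(x)])^2 = (1/\sqrt{2\pi})^2 = 1/(2\pi)$. The required antiderivative is elementary, $\int_0^\rho \arcsin t\,dt = \rho\arcsin\rho + \sqrt{1-\rho^2} - 1$, so that
\begin{align}
I(\rho) = \frac{1}{2\pi} + \frac{\rho}{4} + \frac{1}{2\pi}\big(\rho\arcsin\rho + \sqrt{1-\rho^2} - 1\big),
\end{align}
and collecting terms while factoring out $1/4$ recovers the claimed expression $\tfrac14\big(\rho + \tfrac{2}{\pi}(\sqrt{1-\rho^2} + \rho\arcsin\rho)\big)$. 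As sanity checks one can verify the endpoints: $\rho=0$ gives $1/(2\pi)$, and $\rho=1$ gives $\mathbb{E}[\sigma(x)^2] = 1/2$, both of which match.

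The main obstacle is the rigorous justification of the interchange of differentiation and expectation underlying Price's theorem, compounded by the fact that $\sigma$ is not differentiable at the origin. I would address this by working with the joint density $\phi_\rho(x,y)$ explicitly and using the heat-type identity $\partial_\rho \phi_\rho = \partial_x\partial_y \phi_\rho$; integrating by parts in $x$ and $y$ over the positive quadrant then moves the derivatives off $xy\,\mathbf{1}\{x>0\}\,\mathbf{1}\{y>0\}$ and produces exactly $\int_{x>0,\,y>0}\phi_\rho\,dx\,dy = \Pr(x>0,y>0)$, with the boundary terms vanishing because $\phi_\rho$ decays at infinity and $xy$ vanishes on the axes. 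The kink of ReLU at a single point is harmless, since it lives on a measure-zero set under the continuous Gaussian law. If one prefers to bypass Price's theorem, the fallback is the direct substitution $y = \rho x + \sqrt{1-\rho^2}\,z$ with $z\sim\mathcal{N}(0,1)$ independent of $x$, which reduces $I(\rho)$ to an iterated integral; this route is elementary but more laborious, and I would keep it in reserve as an independent cross-check.
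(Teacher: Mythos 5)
Your proof is correct, but it takes a genuinely different route from the paper's. The paper decomposes the ReLU as $\sigma(t) = (t + |t|)/2$, expands $\mathbb{E}[\sigma(x)\sigma(y)] = \tfrac14\mathbb{E}[xy + x|y| + |x|y + |x||y|]$, kills the cross terms by the oddness of $t \mapsto t|t|$ together with $\mathbb{E}[x \mid y] = \rho y$, and then cites the classical absolute-moment identity $\mathbb{E}|x||y| = \tfrac{2}{\pi}(\sqrt{1-\rho^2} + \rho\arcsin\rho)$. You instead differentiate in $\rho$ via Price's theorem, reduce to Sheppard's orthant probability $\Pr(x>0,y>0) = \tfrac14 + \tfrac{1}{2\pi}\arcsin\rho$, and integrate back with the boundary condition $I(0) = 1/(2\pi)$; the algebra checks out and your endpoint sanity checks at $\rho=0$ and $\rho=1$ are both right. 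Each approach outsources one classical ingredient: the paper leans on the bivariate absolute-moment formula (which is itself usually derived by exactly the differentiation-in-$\rho$ argument you use), while you lean on Price's theorem and the orthant formula. The paper's route is shorter given its cited fact and requires no regularity discussion; yours is more self-contained in spirit and correctly flags the one real technical point — justifying the interchange of $\partial_\rho$ and the expectation for the non-smooth integrand — which your integration-by-parts argument with $\partial_\rho\phi_\rho = \partial_x\partial_y\phi_\rho$ over the positive quadrant handles adequately, since the boundary terms vanish ($xy$ is zero on the axes and $\phi_\rho$ decays at infinity) and the kink of $\sigma$ at the origin is Gaussian-null.
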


\begin{proof}
Note that $\sigma(x) = (x+|x|)/2$. Expanding gives
\begin{align}
\E[\sigma(x)\sigma(y)] = \tfrac14 \E[xy + x|y| + |x|y + |x||y|].
\end{align}
We have $\E[xy]=\rho$. For the cross terms, using $\E[x\mid y]=\rho y$, we obtain
\begin{align}
\E[x|y|] = \E[|y|\,\E[x\mid y]] = \rho \E[y|y|]=0,
\end{align}
since $y\mapsto y|y|$ is an odd function under the symmetric Gaussian distribution. Similarly $\E[|x|y]=0$. Hence,
\begin{align}
\E[\sigma(x)\sigma(y)] = \tfrac14\big(\rho+\E|x||y|\big).
\end{align}
It is a classical fact on absolute moments of Gaussians~\citep{Note_Absolute_Moments} that
\begin{align}
\E|x||y| = \frac{2}{\pi}\Big(\sqrt{1-\rho^2}+\rho \arcsin \rho\Big),
\end{align}
which completes the proof.
\end{proof}

\begin{proof}[Proof of Theorem~\ref{th:gaussian_relu}]
Writing the cosine similarity elementwise, we have
\begin{align}
\frac{(\sigma(\bmu)+\sigma(\bmv))^\top\sigma(\bmu+\bmv)}{\|\sigma(\bmu)+\sigma(\bmv)\|\,\|\sigma(\bmu+\bmv)\|}
= \frac{\sum_i (\sigma(u_i)+\sigma(v_i))\sigma(u_i+v_i)}{\sqrt{(\sum_i \sigma(u_i+v_i)^2)(\sum_i (\sigma(u_i)+\sigma(v_i))^2)}}.
\end{align}
By Theorem~\ref{th:gau_rho}, straightforward calculations yield
\begin{align}
\E[(\sigma(u_i)+\sigma(v_i))\sigma(u_i+v_i)] = \tfrac34+\tfrac{1}{\pi},\quad
\E[\sigma(u_i+v_i)^2]=1,\quad
\E[(\sigma(u_i)+\sigma(v_i))^2]=1+\tfrac{1}{\pi}.
\end{align}

Moreover, we can bound the sub-exponential norms:
\begin{align}
\expnorm{(\sigma(u_i)+\sigma(v_i))\sigma(u_i+v_i)}
\;\leq\;\gaunorm{\sigma(u_i)+\sigma(v_i)}\;\gaunorm{\sigma(u_i+v_i)}
\;\leq\;2\,\gaunorm{\sigma(u_i)}\,\gaunorm{u_i+v_i}
\;\leq\;\tfrac{32}{3}.
\end{align}
Similarly, $\expnorm{\sigma(u_i+v_i)^2}\leq 32/3$ and $\expnorm{(\sigma(u_i)+\sigma(v_i))^2}\leq 32/3$. Hence each of these random variables is sub-exponential. Applying Bernstein's inequality~\citep{HDP_book} with $K=32/3$, we obtain
\begin{align} \label{eq:1}
\Pr\!\left(\Big|\tfrac1d\sum_i (\sigma(u_i)+\sigma(v_i))\sigma(u_i+v_i)-\tfrac34-\tfrac1\pi\Big|\geq t\right)
\leq 2\exp\!\left(-c d \min\!\left(\tfrac{t^2}{K^2},\tfrac{t}{K}\right)\right),
\end{align}
\begin{align} \label{eq:2}
\Pr\!\left(\Big|\tfrac1d\sum_i (\sigma(u_i)+\sigma(v_i))^2-(1+1/\pi)\Big|\geq t\right)
\leq 2\exp\!\left(-c d \min\!\left(\tfrac{t^2}{K},\tfrac{t}{K}\right)\right),
\end{align}
\begin{align} \label{eq:3}
\Pr\!\left(\Big|\tfrac1d\sum_i \sigma(u_i+v_i)^2-1\Big|\geq t\right)
\leq 2\exp\!\left(-c d \min\!\left(\tfrac{t^2}{K},\tfrac{t}{K}\right)\right),
\end{align}
where $c$ is a constant.

Now set $\delta \geq 2\exp(-c d \min(t^2/K^2, t/K))$. Equivalently,
\begin{align}
t \;\geq\; K\max\!\left(\sqrt{\tfrac{1}{cd}\log(2/\delta)},\,\tfrac{1}{cd}\log(2/\delta)\right).
\end{align}
Letting $\epsilon$ denote this bound, \eqref{eq:1} implies that with probability at least $1-\delta$,
\begin{align}
\tfrac34+\tfrac1\pi-\epsilon
\;\leq\;\tfrac1d\sum_i (\sigma(u_i)+\sigma(v_i))\sigma(u_i+v_i)
\;\leq\;\tfrac34+\tfrac1\pi+\epsilon.
\end{align}
Analogous statements hold for \eqref{eq:2} and \eqref{eq:3}. Combining these inequalities yields
\begin{align}
\frac{\tfrac34+\tfrac1\pi-\epsilon}{\sqrt{(1+\epsilon)(1+1/\pi+\epsilon)}}
\;\leq\;
\frac{\sum_i (\sigma(u_i)+\sigma(v_i))\sigma(u_i+v_i)}{\sqrt{(\sum_i \sigma(u_i+v_i)^2)(\sum_i (\sigma(u_i)+\sigma(v_i))^2)}}
\leq
\frac{\tfrac34+\tfrac1\pi+\epsilon}{\sqrt{(1-\epsilon)(1+1/\pi-\epsilon)}}
\end{align}
with probability at least $1-3\delta$, completing the proof.
\end{proof}

\section{Additional Experimental Results}

\subsection{CIFAR-100 Dataset} \label{app:CIFAR100}
In this work, we primarily focused on experiments with relatively simple datasets such as MNIST and CIFAR-10. This choice is due to the fact that LMC does not hold unless the models are made sufficiently wide, which makes verification difficult on more complex datasets. In this section, we present experiments conducted on CIFAR-100, a more challenging dataset, to investigate whether LMC can also be achieved in this setting. \cref{fig:acc_loss_CIFAR100} reports the accuracy and calibrated test loss when changing the merging ratio $\lambda$. We observe that increasing the model width leads to a monotonic decrease of the barrier in both metrics, eventually eliminating it. Notably, while previous permutation-based merging methods required that the original models achieve sufficiently high test accuracy and low test loss in order for LMC to hold, our permutation-free approach demonstrates that LMC can be achieved even when the models' performance is comparatively modest.

\begin{figure}
    \centering
    \begin{subfigure}[b]{0.49\textwidth}
        \centering
        \includegraphics[width=\textwidth]{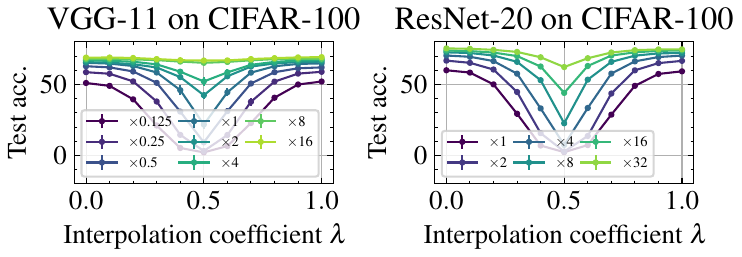}
        \caption{Test accuracies of merged models.}
    \end{subfigure}
    \hfill
    \begin{subfigure}[b]{0.49\textwidth}
        \centering
        \includegraphics[width=\textwidth]{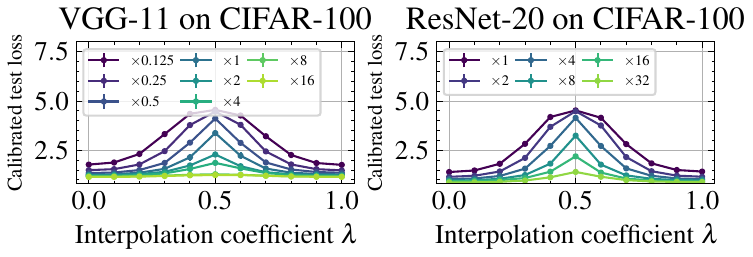}
        \caption{Calibrated test losses of merged models.}
    \end{subfigure}
    \hfill
    \caption{Performance of merged models for VGG-11 and ResNet-20 trained on the CIFAR-100 dataset.}
    \label{fig:acc_loss_CIFAR100}
\end{figure}

\subsection{Permutation-Based Method}

\begin{figure}[t]
    \centering
    \includegraphics[width=\textwidth]{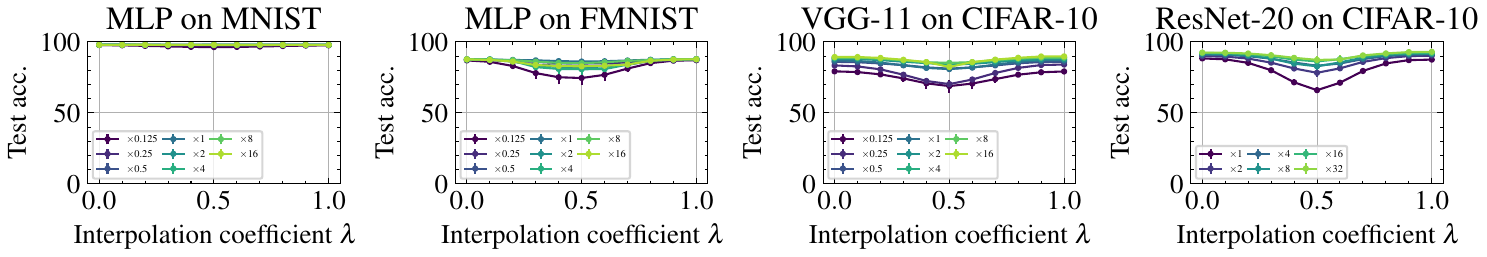}
    \caption{Test accuracies of merged models \textbf{with permutations} found by WM.}
    \label{fig:acc_merged_model_with_perm}
\end{figure}
The test accuracy values obtained by merging with the permutations found by WM are shown in \cref{fig:acc_merged_model_with_perm}. While the use of permutations improves performance compared to the case without permutations~(\cref{fig:test_acc_merged_model}), the gap nearly vanishes as the model width becomes sufficiently large.

\red{
\subsection{Loss and Accuracy Barriers}

\begin{figure}[t]
\centering
\begin{subfigure}[t]{\textwidth}
\centering
\includegraphics[width=\linewidth]{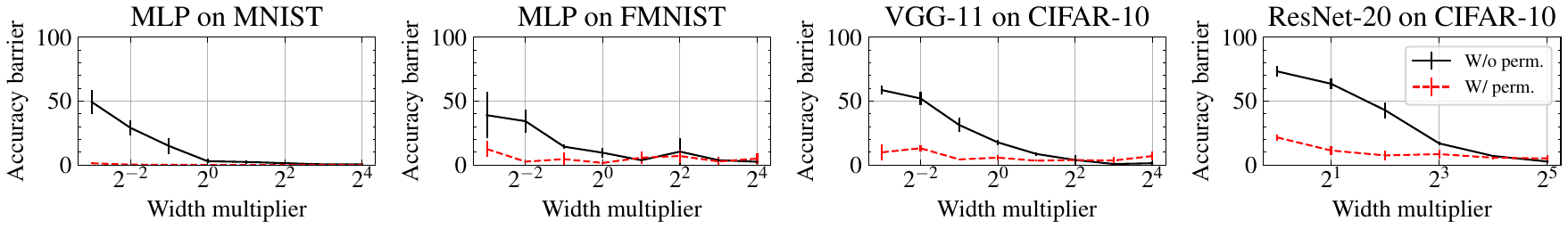}
\caption{Accuracy barrier.}
\end{subfigure}
\begin{subfigure}[t]{\textwidth}
\centering
\includegraphics[width=\linewidth]{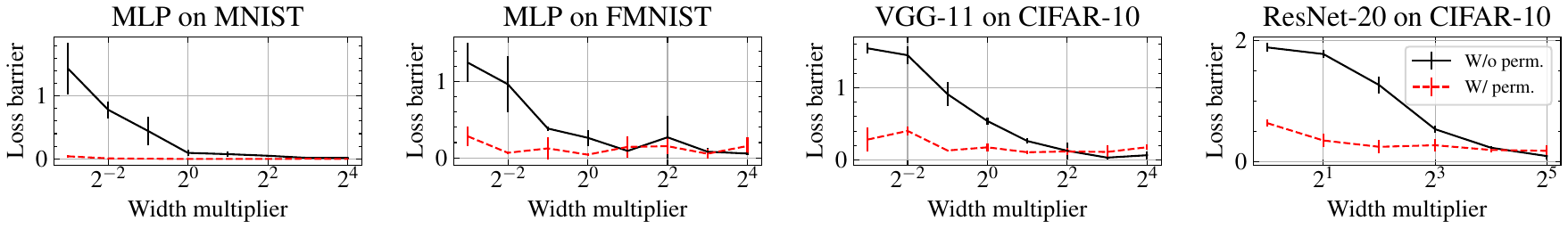}
\caption{Calibrated loss barrier.}
\end{subfigure}
\caption{Accuracy and calibrated loss barriers with and without the permutations identified by WM.}
\label{fig:barrier}
\end{figure}

\Cref{fig:barrier} shows how the barrier changes with increasing width with and without the permutation found by WM. We observe that applying the permutation tends to reduce the barrier regardless of the width. However, when the width is sufficiently large, the barrier becomes nearly zero regardless of whether the permutation is used.
}

\red{
\subsection{Random Permutations}
\begin{figure}[t]
\begin{subfigure}[t]{\textwidth}
\centering
\includegraphics[width=\linewidth]{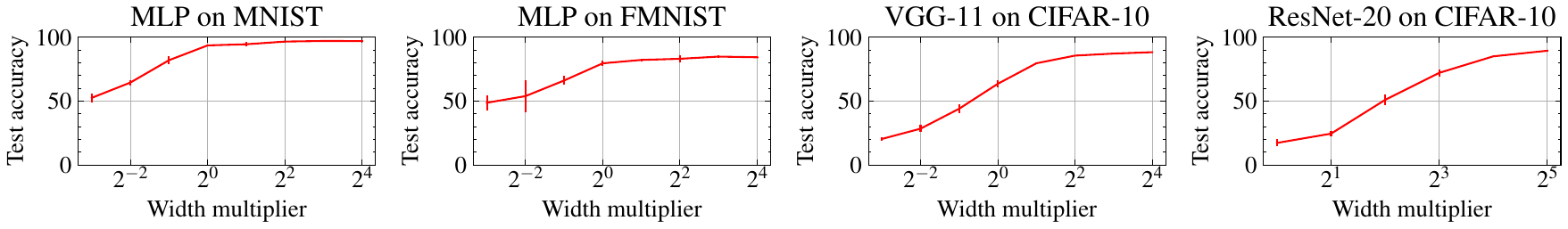}
\caption{Test accuracies of merged models \textbf{with random permutations}.}
\label{fig:test_acc_merged_model_with_rand_perm}
\end{subfigure}
\begin{subfigure}[t]{\textwidth}
\centering
\includegraphics[width=\linewidth]{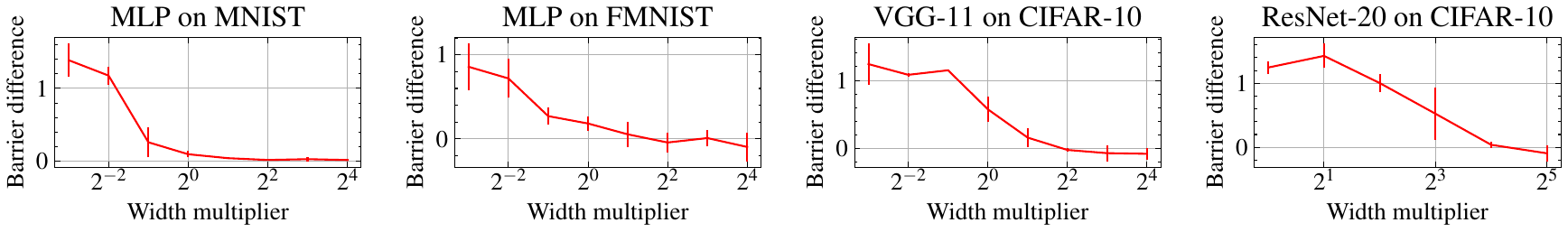}
\caption{Difference in calibrated loss barriers between random permutations and the permutation identified by WM.}
\label{fig:loss_barrier_diff}
\end{subfigure}
\caption{Effects of random permutations on accuracy and loss barriers.}
\label{fig:random_perm}
\end{figure}

For two models $\parama$ and $\paramb$ trained by SGD, we applied randomly chosen permutations $\pi_a$ and $\pi_b$ to each model, and merged the resulting models $\pi_a(\parama)$ and $\pi_b(\paramb)$. The corresponding test accuracies are shown in \cref{fig:test_acc_merged_model_with_rand_perm}. We observe that, when the width is sufficiently large, high test accuracy is obtained regardless of the choice of permutations. Furthermore, \cref{fig:loss_barrier_diff} shows the difference in calibrated loss barriers between using random permutations and using the permutation identified by WM. When the width is large enough, the barrier values remain nearly unchanged regardless of which permutation is applied.
}

\red{
\subsection{Inverse Temperature}

We present the inverse temperature values used for calibration when $\lambda = 1/2$ in \cref{fig:inverse_temperature}. This inverse temperature corresponds to the factor by which the logits are divided during calibration. As the results indicate, under sufficiently wide conditions, the inverse temperature takes small values, compensating for the reduction in the logit norms caused by LEWC.

\begin{figure}[t]
\centering
\includegraphics[width=\linewidth]{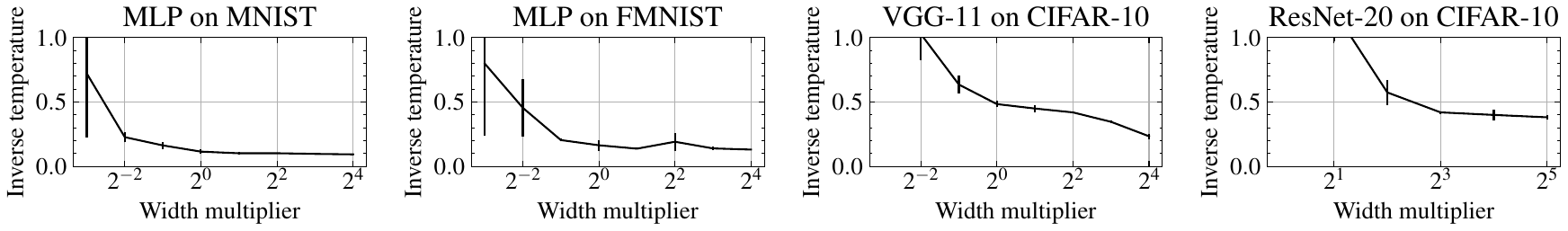}
\caption{Inverse temperature used for logit calibration when $\lambda = 1/2$.}
\label{fig:inverse_temperature}
\end{figure}

\subsection{Exponential Decay of Activation Norms}

In \cref{def:LEWC}, we showed that the norm of the merged model's output decreases exponentially. In this subsection, we experimentally verify whether this phenomenon actually occurs. The results are presented in \cref{fig:relu_output_norm}. For the MLP, as the depth increases, the activation norms of the merged model decrease exponentially, which is consistent with the prediction from LEWC. In contrast, for VGG-11 and ResNet-20, the norms do not decrease exponentially. This is presumably because batch normalization compensates for the variance.

\begin{figure}[t]
\centering
\includegraphics[width=\linewidth]{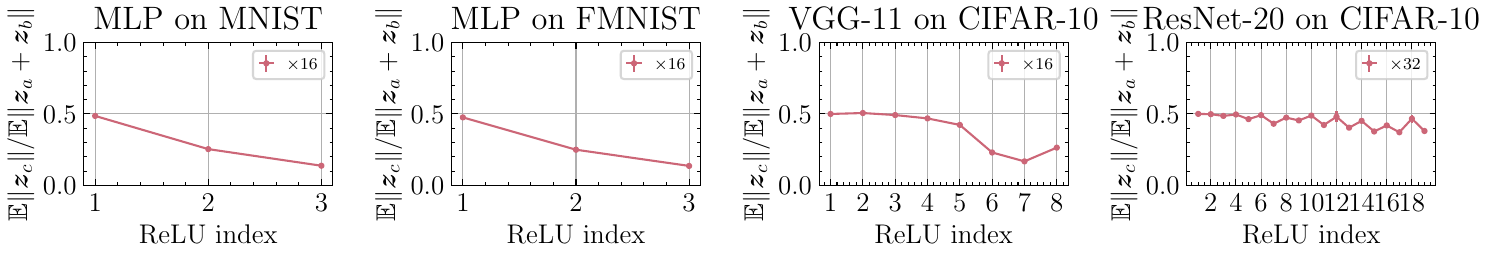}
\caption{The ratio of activation norms between the merged model and the original models, $\mathbb{E}\|f_\ell(\bmx; (\parama + \paramb)/2)\|/\mathbb{E}\|f_\ell(\bmx; \parama) + f_\ell(\bmx; \paramb)\|$, for $\lambda = 1/2$. The test data are fed into the models.}
\label{fig:relu_output_norm}
\end{figure}
}

\subsection{Second Moments of ReLU Inputs for All Layers} \label{app:relu_input_std_dev_all_layer}

\begin{figure}[t]
\centering
\begin{subfigure}[t]{\textwidth}
\centering
\includegraphics[width=0.8\linewidth]{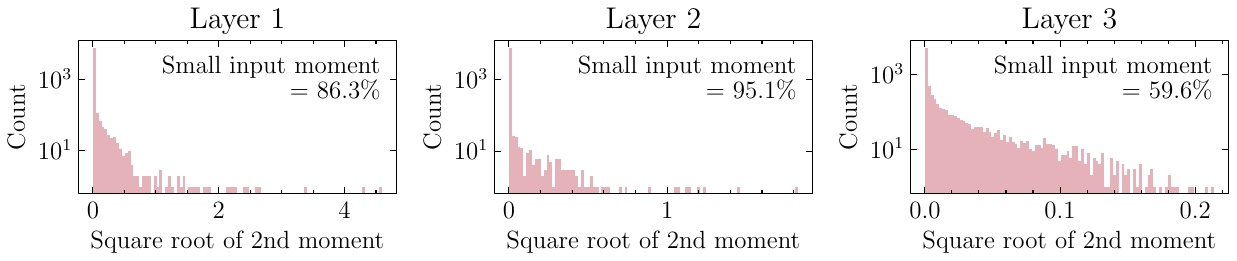}
\caption{MLP on MNIST}
\end{subfigure}
\begin{subfigure}[t]{\textwidth}
\centering
\includegraphics[width=0.8\linewidth]{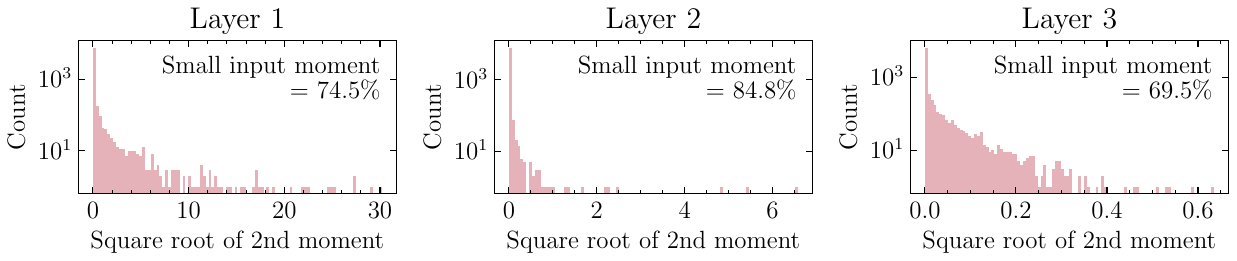}
\caption{MLP on FMNIST}
\end{subfigure}
\begin{subfigure}[t]{\textwidth}
\centering
\includegraphics[width=\linewidth]{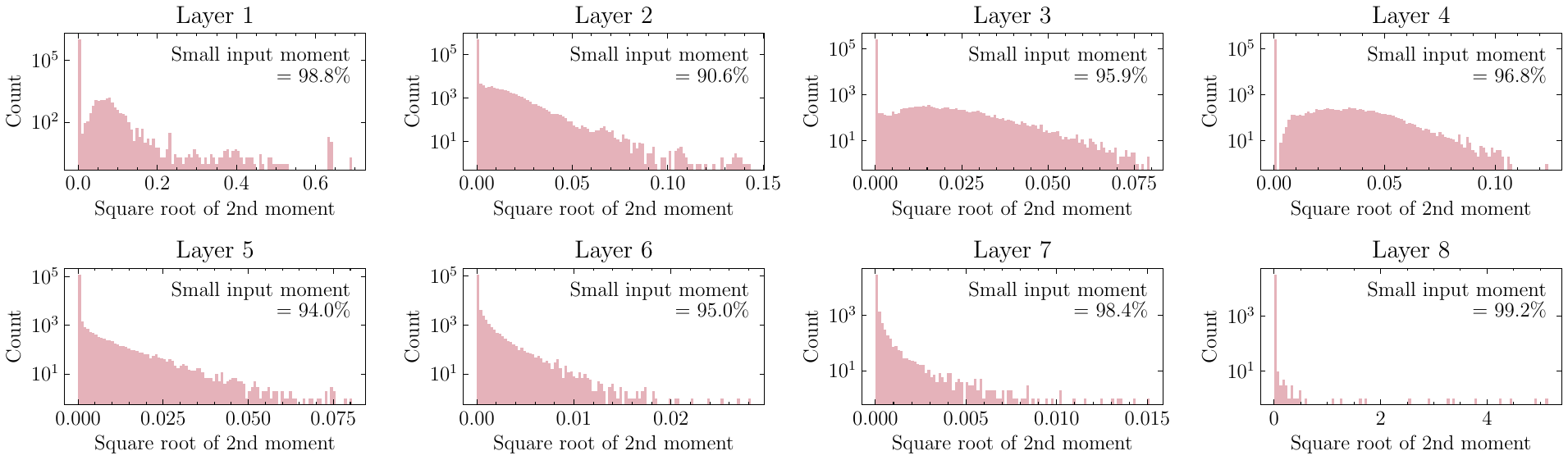}
\caption{VGG-11 on CIFAR-10}
\end{subfigure}
\begin{subfigure}[t]{\textwidth}
\centering
\includegraphics[width=\linewidth]{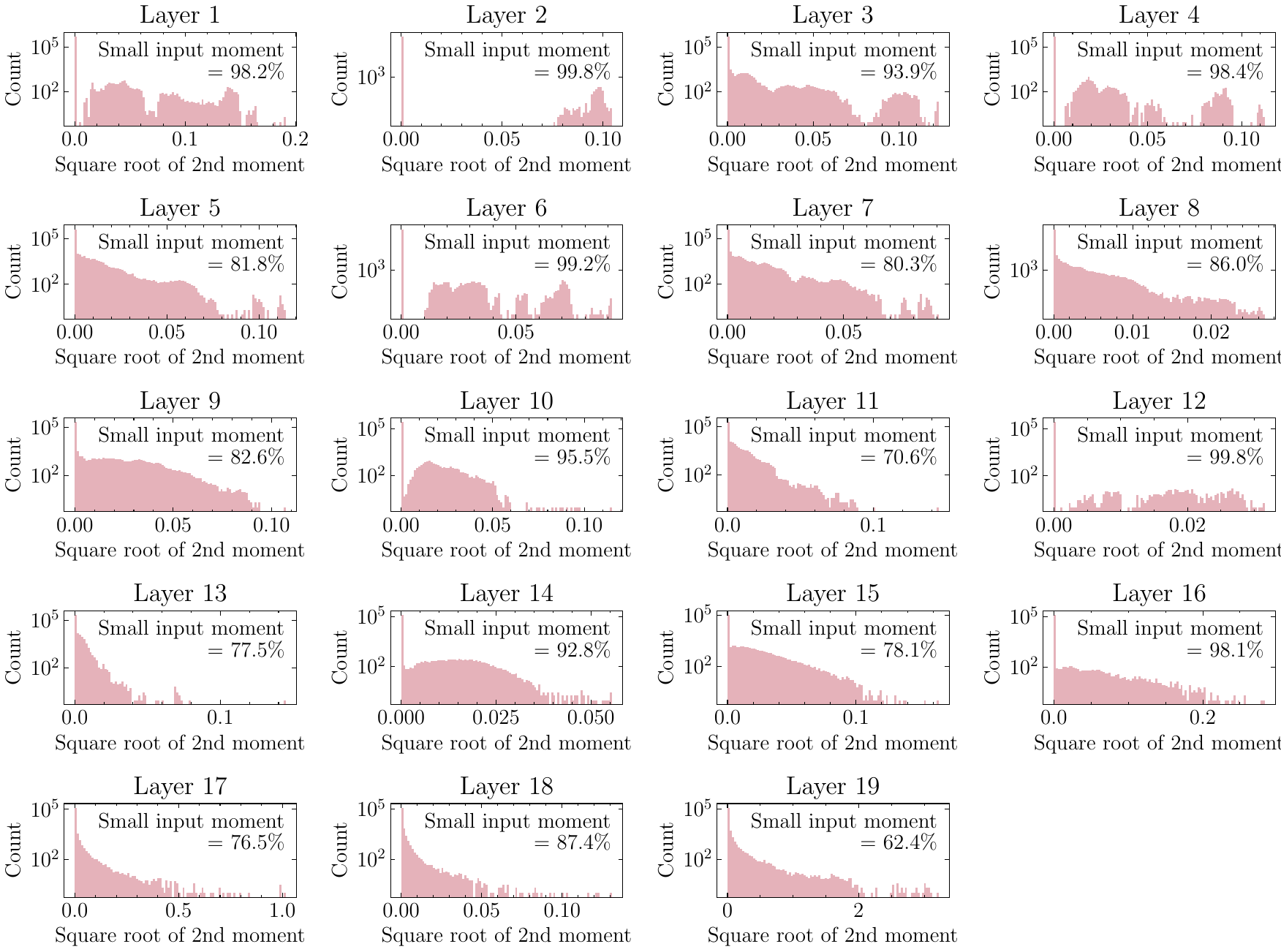}
\caption{ResNet-20 on CIFAR-10}
\end{subfigure}
\caption{Histogram of the square root of the second (uncentered) moment for each dimension of the ReLU inputs across all layers.}
\label{fig:relu_input_variance_for_all_layers}
\end{figure}

In this subsection, we present histograms of the square root of the second (uncentered) moment of the ReLU inputs for all layers of each model. The results correspond to an MLP with width scaled by $16\times$, a VGG-11 scaled by $16\times$, and a ResNet-20 scaled by $32\times$. The histograms are shown in \cref{fig:relu_input_variance_for_all_layers}. The vertical axis is plotted on a logarithmic scale. In addition, the values shown next to “Small input std.” indicate the proportion of input dimensions that fall into the leftmost bin. From \cref{fig:relu_input_variance_for_all_layers}, it can be observed that in every layer, the vast majority of dimensions have a very small second moment around zero.

\red{
\subsection{Overlap Ratio of Active Neurons When Using Permutations}

\begin{figure}[t]
\centering
\includegraphics[width=\linewidth]{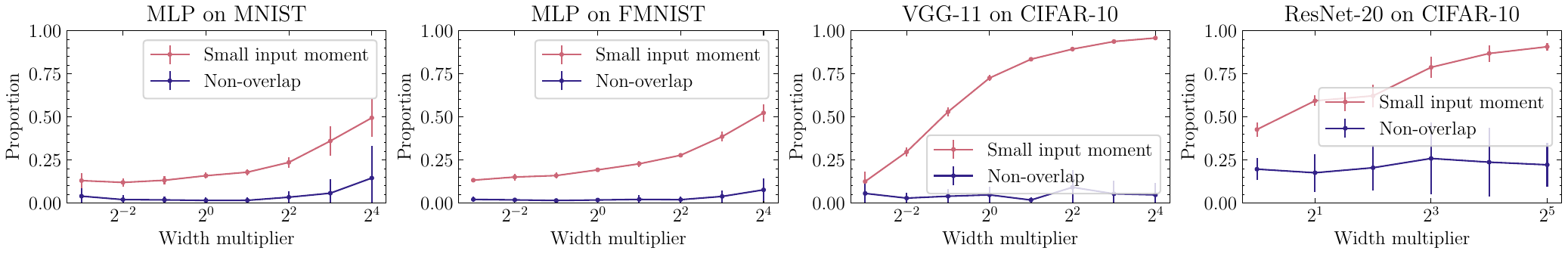}
\caption{\red{Proportion of input dimensions with a small square root of the second moment (``Small input moment'') and the proportion of non-overlapping large moment dimensions (``Non overlap'') between two trained models for the second ReLU layer, when using permutations found by WM.}}
\label{fig:overlap_perm}
\end{figure}

\Cref{fig:overlap_perm} shows the overlap ratio of active neurons in the second ReLU layer when applying the permutations identified by WM. This figure corresponds to \cref{fig:input_dev}, where permutations are not used. When permutations are used, the non overlap ratio of active neurons clearly decreases, indicating that most neurons with large output second moments overlap between the two models. We attribute this to the fact that WM searches for permutations that reduce the $L^2$ distance between the weight matrices of the two models, which in turn tends to align their activation patterns.
}

\red{
\subsection{Dependence on Weight Decay}

\begin{figure}[t]
\begin{subfigure}[t]{\textwidth}
\centering
\includegraphics[width=\linewidth]{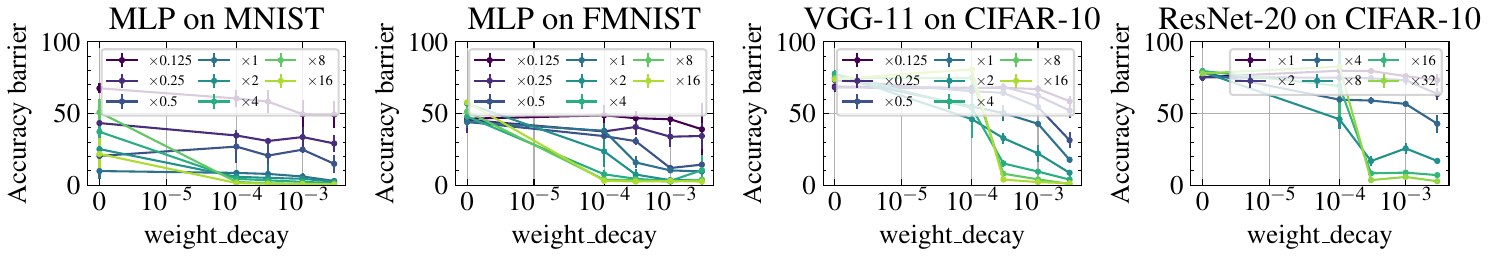}
\caption{Accuracy barrier.}
\end{subfigure}
\begin{subfigure}[t]{\textwidth}
\centering
\includegraphics[width=\linewidth]{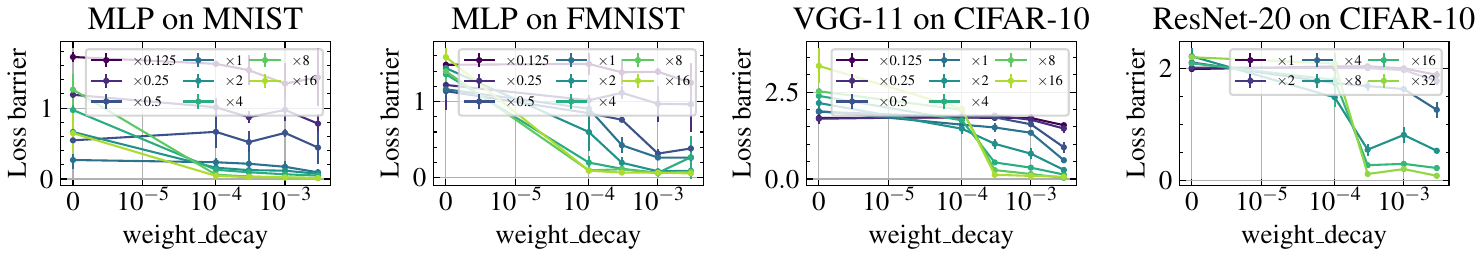}
\caption{Calibrated loss barrier.}
\end{subfigure}
\caption{Accuracy and loss barriers when varying the strength of weight decay.}
\label{fig:barrier_vs_weight_decay}
\end{figure}

\cref{fig:barrier_vs_weight_decay} shows the barrier values for permutation-free merging
while varying the weight decay coefficient from 0 to 0.003. For sufficiently wide models, smaller weight decay results in smaller barriers.
}

\red{
\subsection{Second Order Optimization}

\begin{figure}[t]
\begin{subfigure}[t]{\textwidth}
\centering
\includegraphics[width=\linewidth]{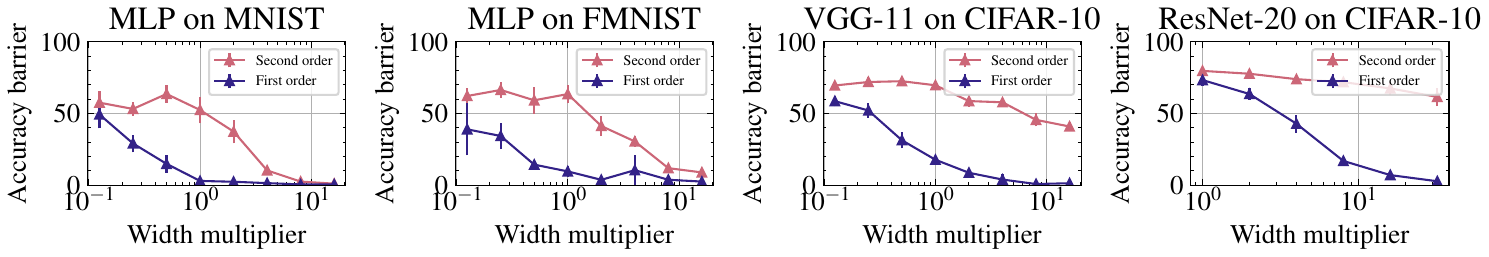}
\caption{Accuracy barrier.}
\end{subfigure}
\begin{subfigure}[t]{\textwidth}
\centering
\includegraphics[width=\linewidth]{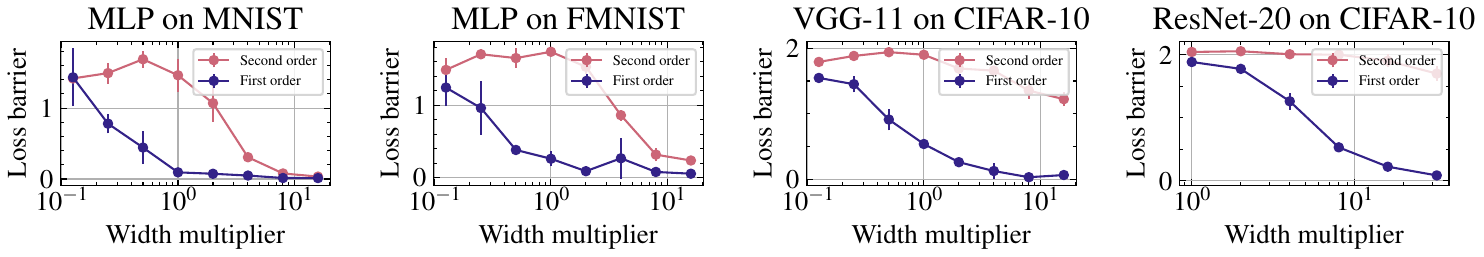}
\caption{Calibrated loss barrier.}
\end{subfigure}
\caption{Accuracy and loss barriers with and without second-order optimization ($\lambda = 1/2$).}
\label{fig:barrier_vs_width_second_opt}
\end{figure}

\begin{figure}[t]
\begin{subfigure}[t]{\textwidth}
\centering
\includegraphics[width=\linewidth]{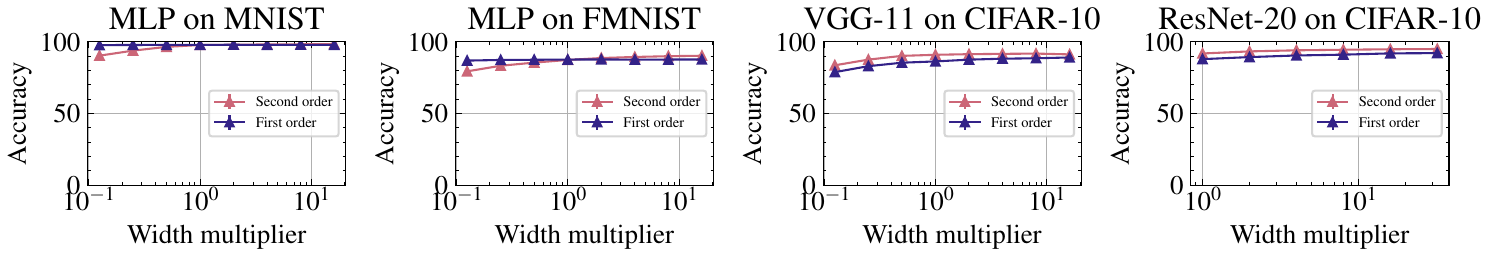}
\caption{Accuracy.}
\end{subfigure}
\begin{subfigure}[t]{\textwidth}
\centering
\includegraphics[width=\linewidth]{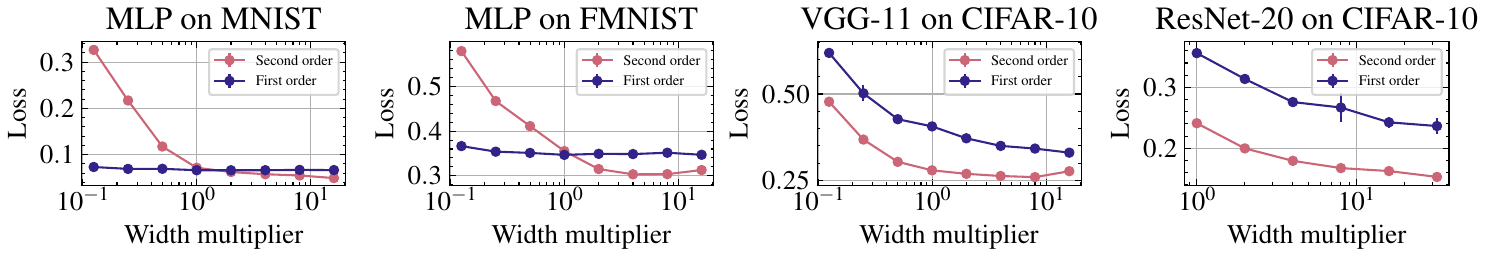}
\caption{Calibrated loss.}
\end{subfigure}
\caption{Accuracy and loss of trained models with and without second order optimization.}
\label{fig:perf_vs_width_second_opt}
\end{figure}

In the main experiments, we used first order optimizers such as SGD and Adam. Here, we additionally examine the case where a second order optimization method is employed. Specifically, we adopt Sophia as a second order optimizer~\citep{Liu_ICLR_2024}. The barrier results are shown in \cref{fig:barrier_vs_width_second_opt}. For reference, we also report the test accuracy and calibrated loss of the trained models in \cref{fig:perf_vs_width_second_opt}. The training conditions, including hyperparameters, are kept the same as in the first order optimization setting. Therefore, the weight decay is fixed to 0.003.

From \cref{fig:perf_vs_width_second_opt}, we observe that using a second order optimizer improves both accuracy and loss. However, the barriers are smaller when using first order optimizers. With second order optimization, the barriers remain clearly larger than zero for all models except for the MLP trained on MNIST. Nevertheless, the barrier decreases as the width increases, suggesting that LMC may hold when the width is sufficiently large. Since we did not retune hyperparameters for Sophia, it is also possible that using a larger weight decay or a different learning rate would make LMC more likely to emerge under second order optimization.

}

\red{
\subsection{Effect of Skip Connections and Batch Normalization}
\begin{figure}[t]
\begin{subfigure}[t]{0.49\textwidth}
\centering
\includegraphics[width=\linewidth]{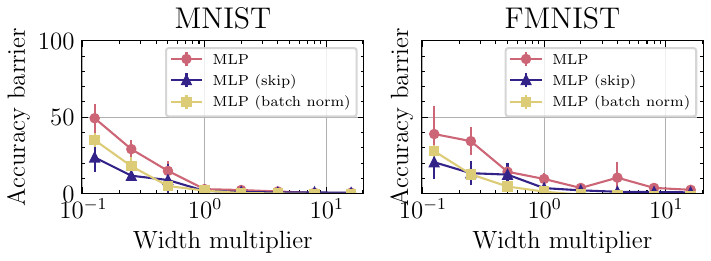}
\caption{Accuracy barrier.}
\end{subfigure}
\begin{subfigure}[t]{0.49\textwidth}
\centering
\includegraphics[width=\linewidth]{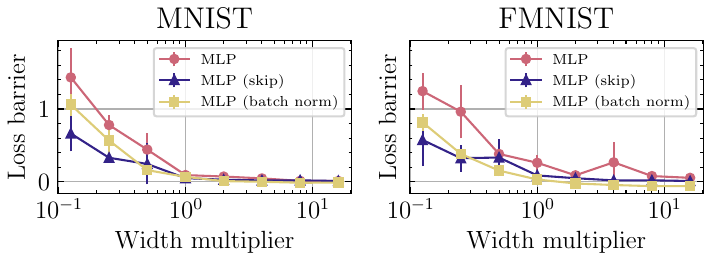}
\caption{Calibrated loss barrier.}
\end{subfigure}
\caption{Accuracy and calibrated loss barriers of models trained with and without skip connections or batch normalization.}
\label{fig:perf_mlp}
\end{figure}

\begin{figure}[t]
\begin{subfigure}[t]{0.49\textwidth}
\centering
\includegraphics[width=\linewidth]{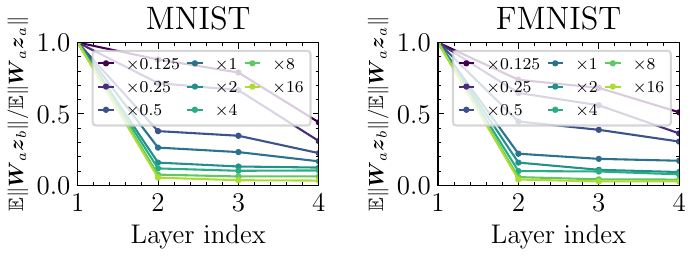}
\caption{MLP with skip connections.}
\end{subfigure}
\begin{subfigure}[t]{0.49\textwidth}
\centering
\includegraphics[width=\linewidth]{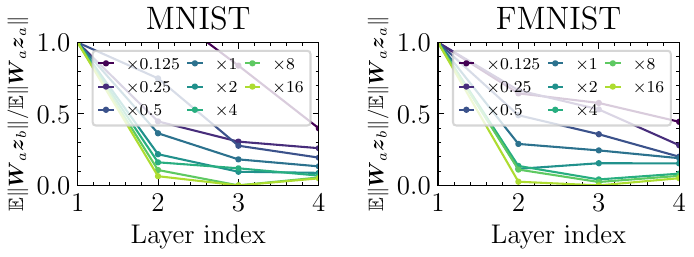}
\caption{MLP with batch normalization layers.}
\end{subfigure}
\caption{Ratio of mean norms $\mathbb{E} \|\bm{W}_\ell^{(a)} \bm{z}_\ell^{(b)}\|/\mathbb{E} \|\bm{W}_\ell^{(a)} \bm{z}_\ell^{(a)}\|$ for each layer, for MLPs with skip connections or batch normalization.}
\label{fig:norm_ratio_mlp}
\end{figure}

To clarify the effects of skip connections and batch normalization, we prepared two variants of the MLP: one with skip connections and one with batch normalization layers. For these models, we examined the barrier values and how easily reciprocal orthogonality is satisfied. Skip connections were added by summing each intermediate layer input with its output, for all hidden layers except the input and final layers. The resulting barrier values are shown in \cref{fig:perf_mlp}, where the loss barrier is computed using calibrated loss. As shown in the figure, adding skip connections or batch normalization leads to smaller barriers.

To assess reciprocal orthogonality, we also evaluated the mean norm ratio $\mathbb{E} \|\bm{W}_\ell^{(a)} \bm{z}_\ell^{(b)}\|/\mathbb{E} \|\bm{W}_\ell^{(a)} \bm{z}_\ell^{(a)}\|$ for each layer, as reported in \cref{fig:norm_ratio_mlp}. The results indicate that the presence or absence of skip connections or batch normalization does not substantially change this ratio.
}

\section*{Use of Large Language Models (LLMs)}
In preparing this manuscript, we made use of a large language model (ChatGPT-5, developed by OpenAI). The model was employed exclusively to check the fluency and naturalness of English sentences drafted by the authors. All technical content, mathematical derivations, experimental design, and scientific claims were created and verified by the authors. The authors carefully reviewed and edited all model outputs to ensure accuracy and clarity.

\end{document}